\newtheorem{theorem}{Theorem}
\newtheorem{lemma}[theorem]{Lemma}
\newtheorem{definition}{Definition}
\newtheorem{assumption}{Assumption}
\title{Randomized Policy Learning for Continuous State and Action MDPs}
\author{%
  Hiteshi Sharma, Rahul Jain and Shashank Hegde\\
  University of Southern California\\
  \texttt{(hiteshis,rahul.jain)@usc.edu} \\
}
\begin{document}

\maketitle

\begin{abstract}
Deep reinforcement learning methods
have achieved state-of-the-art results in a variety of challenging, high-dimensional
domains ranging from video games to locomotion. The key to success has been the use of deep neural networks  used to approximate the policy and value function. Yet, substantial tuning of weights is required for good results. We instead use randomized function approximation. Such networks are  not only cheaper than training fully connected networks but also improve the numerical performance.
We present \texttt{RANDPOL}, a generalized policy iteration algorithm for MDPs with continuous state and action spaces. Both the policy and value functions are represented with randomized networks.  
We also give finite time guarantees on the performance of the algorithm. Then we  show the numerical performance on challenging environments and compare them with deep neural network based algorithms.
  
\end{abstract}
\section{Introduction}

Recently, for continuous control tasks, reinforcement learning (RL) algorithms based on actor-critic architecture \cite{lillicrap2015continuous} or policy optimization \cite{schulman2017proximal} have shown remarkably good performance. The policy and the value function are represented by deep neural networks and then the weights are updated accordingly. However,  \cite{islam2017reproducibility} shows that the performance of these RL algorithms vary a lot with changes in hyperparameters, network architecture etc.  
Furthermore, \cite{mania2018simple} showed that a simple linear policy-based method with weights  updated by a  random search method can outperform some of these state-of-the-art results. A key question is how far we can go by relying almost exclusively on these architectural biases.

For Markov decision processes (MDPs) with discrete state and action spaces,  model-based algorithms based on dynamic programming (DP) ideas \cite{Put05} can be used when the model is known.  Unfortunately, in many problems (e.g., robotics), the system model is unknown, or simply too complicated to be succinctly stated and used in DP algorithms. Usually, latter is the more likely case. In such cases, model-free stochastic approximation algorithms like Q-learning are used which are known to be very slow to converge \cite{sutton2018reinforcement}.  For continuous spaces, these ideas are used in conjunction with function approximation for generalization. An alternative has been empirical algorithms such as \cite{munos2008finite,2017arXiv170907506H} which replace the expectation in the Bellman operator with a sample average approximation obtained by getting multiple samples of the next state for each state-action pair. This requires access to a generative model. At first glance, this may seem restrictive but in a large variety of applications, we have access to a physics  simulator engine. In fact, in robotics, for example, training of RL algorithms is first done offline in high-fidelity simulators as the training takes too long to converge for it to be done directly on the physical robot.


We focus on (approximate) dynamic programming algorithms based on actor-critic architecture for RL with access to a generative model (a simulator). 
We consider both the state space and the action space to be continuous (together referred to as a Continuous MDP).  So, we need function approximation for both policy and value function. In this paper, we consider \emph{randomized} networks where the connections in bottom layer(s) are left untrained after initialization and only the last layer is finely tuned.  Such networks have been studied extensively, both for shallow \cite{rahimi2008uniform,rahimi2009weighted} and deep architecture \cite{2020arXiv200212287G}. Thus, the training algorithm only needs to operate on a reduced set of weights with similar or even better performance with respect to fully trained architectures. This is different from using the last-hidden layer of deep neural networks as a feature extractor and  updating the last layer with different algorithm \cite{levine2017shallow}, which still trains a fully connected network. Furthermore, \cite{rahimi2008uniform} shows that such random bases correspond to a reproducing kernel Hilbert space (RKHS), which are known to be dense in the space of continuous functions. They also provide theoretical bounds on the error due to approximation by finite random features. 

The main contributions of this paper are: First, an algorithm that is easy  to interpret,  and can be viewed as a generalized policy iteration algorithm in combination with  randomized function approximation. Secondly, we give finite-time theoretical guarantees unlike most of the RL algorithms for continuous MDPs with only asymptotic convergence analysis, or none at all. Third, and most important of all, we demonstrate that the algorithm works, and works better than state-of-the-art algorithms like PPO and DDPG, on a quadrupedal robot problem in the Minitaur environment.  To the best of our  knowledge,  no previous work on continuous MDPs have provided algorithms that work empirically on complicated control problems, and also provided a  theoretical  analysis.  

The rest of the paper is organized as follows. Section \ref{sec:prob_form} presents the problem formulation. Section \ref{sec:algo} presents the algorithm and the convergence result followed by the theoretical analysis in the next section. The last section focuses on numerical experiments.

\section{Preliminaries}
\label{sec:prob_form}
Consider an MDP $(\mathcal{X}, \mathcal{U}, P,r,\gamma)$ where $\mathcal{X}$ is the state space and $\mathcal{U}$ is the action space. The transition probability  kernel is given by $P(\cdot | x,u)$, i.e., if action $u$ is executed in state $x$, the probability that the next state is in a Borel-measurable set $B$ is $P(x_{t+1} \in B | x_t = x, u_t =u)$ where $x_t$ and $u_t$ are the state and action at time $t$. The reward function is $r: \mathcal{X} \times \mathcal{U} \rightarrow \mathbb{R}$. We are interested in maximizing the long-run expected discounted reward where the discount parameter is $\gamma$.

Let $\Pi$ denote the class of stationary deterministic Markov policies mappings $\pi: \mathcal{X} \to \mathcal{U}$ which only depend on history through  the current state. We only consider such policies since it is well known that there
is an optimal MDP policy in this class. When the initial state is given, any policy $\pi$ determines a probability measure $P^{\pi}$. Let the expectation with respect to this measure be $\mathbb{E}^{\pi}$. We focus on infinite horizon discounted reward criterion. The expected discounted reward or the action-value function for a policy $\pi$ and initial state $x$ and action $u$ is given as
\[
Q^{\pi}(x,u) := \mathbb{E}^{\pi} \left[\sum_{t=0}^{\infty} \gamma^t \, r(x_t,u_t) \bigg| x_0 = x, u_0 =u \right]
\] 
The optimal value function is given as
\[
Q^{*}(x) := \sup_{\pi \in \Pi} \mathbb{E}^{\pi} \left[\sum_{t=0}^{\infty} \gamma^t \, r(x_t,u_t) \bigg| x_0 = x,u_0 = u \right]
\] 
and the policy which maximizes the value function is the optimal policy, $\pi^*$. Now we make the following assumptions on the regularity of the MDP.
\begin{assumption}(\textbf{Regularity of MDP})
The state space $\mathcal{X}$ and the action space $\mathcal{U}$ are compact subset of $d_X$ and $d_U$ dimensional Euclidean spaces respectively. The rewards are uniformly bounded by $r_{\max}$, i.e., $r(x,u) \leq r_{\max}$, for all $(x,u) \in \mathcal{X} \times \mathcal{U}$. Furthermore, $\mathcal{U}$ is convex.
\label{assum:reg_mdp}
\end{assumption}
The assumption above implies that for any policy $\pi$, $Q^{\pi} \leq Q_{\max} = r_{\max}/(1-\gamma)$. The next assumption is on Lipschitz continuity of MDP in action variable.

\begin{assumption}(\textbf{Lipschitz continuity})
The reward and the transition kernel are Lipschitz continuous with respect to the action i.e., there exist constants $L_r$ and $ L_p $ such that for all $(x,u,u') \in \mathcal{X} \times \mathcal{U} \times \mathcal{U}$ and a measurable set $B$ of $\mathcal{X}$, the following holds:
\begin{align*}
\left|r(x,u) - r(x,u') \right|&\leq L_r \|u-u'\| ~~\text{and }~~
\left |P(B|x,u) - P(B|x,u') \right| &\leq L_p \|u-u'\|.
\end{align*}
\label{assum:lips_mdp}
\end{assumption}
The compactness of action space combined with Lipschitz continuity implies that the greedy policies do exist. Let $\mathcal{B}(\mathcal{X},\mathcal{U})$ be the set of  functions on $\mathcal{X}$ and $\mathcal{U}$ such that $\|f\|_{\infty} \leq Q_{\max}$.  
Let us now define the Bellman operator for action-value functions $G: \mathcal{B}(\mathcal{X}, \mathcal{U}) \to \mathcal{B}(\mathcal{X},\mathcal{U})$ as follows
\[
G\, Q(x,u) :=  r(x,u) + \gamma \, \mathbb{E}_{x' \sim P(\cdot|x,u)} \max_{u'} Q(x',u').
\]
It is well known that the operator $G$ is a contraction with respect to $\| \cdot\|_{\infty}$ norm and the contraction parameter is the discount factor $\gamma$. Hence, the sequence of iterates $Q_k = G\, Q_{k-1}$ converge to $Q^*$ geometrically. Since we will be using the $L_2$ norm, we do not have a contraction property with respect to it. Hence, we need bounded  Radon-Nikodym derivatives of transition probabilities which we illustrate in the next assumption. Such an assumption has been used earlier with finite action spaces \cite{munos2008finite,haskell2017randomized} and for continuous action spaces in \cite{antos2008fitted}.
\begin{assumption}(\textbf{Stochastic Transitions})
	\label{assum:radon}  For all $\left(x,\,u\right)\in\mathcal{X}\times\mathcal{U}$,
	$P\left(\cdot\,\vert\,x,\,u\right)$ is absolutely continuous with
	respect to $\mu$ and $C_{\mu}\triangleq\sup_{\left(x,\,u\right)\in\mathcal{X}\times\mathcal{U}}\left\Vert \frac{dP\left(\cdot\,\vert\,x,\,u\right)}{d\mu}\right\Vert _{\infty}<\infty$.
\end{assumption}

Since we have a sampling based algorithm, we need a function space to approximate value functions. In this paper, we focus on randomized function approximation via random features. 
Let $\Theta$ be a set of parameters and let $\phi\text{ : } \mathcal{X} \times \mathcal{U}\times\Theta\rightarrow\mathbb{R}$. The feature functions need to satisfy $\sup_{\left(x,u,\,\theta\right)\in\mathcal{X} \times \mathcal{U}\times\Theta}|\phi\left(x,u;\,\theta\right)|\leq1$, e.g., Fourier features. We define
\[
\mathcal{F}\left(\Theta\right) := \left\{ f: f\left(x,u\right)=\int_{\Theta}\phi\left(x,u;\,\theta\right)\alpha\left(\theta\right)d\theta,\, \,|\alpha\left(\theta\right)|\leq C\,\nu\left(\theta\right),\,\forall\theta\in\Theta\right\} .
\]
We are interested in finding the best fit within
finite sums of the form $\sum_{j=1}^{J_Q}\alpha_{j}\phi\left(x,u;\,\theta_{j}\right)$.
Doing classical function fitting with $\sum_{j=1}^{J_Q}\alpha_{j}\phi\left(x,u;\,\theta_{j}\right)$
leads to nonconvex optimization problems because of the joint dependence
in $\alpha$ and $\theta$. Instead, we fix a density $\nu$ on $\Theta$
and draw a random sample $ \theta_{j}$
from $\Theta$ for $j=1,2,\ldots J_Q$. Once these $\left( \theta_{j}\right) _{j=1}^{J_Q}$
are fixed, we consider the space of functions,
\[
\widehat{\mathcal{F}}\left(\theta^{1:J_Q}\right):=
\left\{ f\left(x,u\right)=\sum_{j=1}^{J_Q}\alpha_{j}\phi\left(x,u;\,\theta_{j}\right)\,\vert\,\|\left(\alpha_{1},\ldots,\,\alpha_{J_Q}\right)\|_{\infty}\leq C/J_Q\right\} .
\]
Now, it remains to calculate weights $\alpha$ by minimizing a convex loss. We also need a function space for approximating the policy. Since $\mathcal{U}$ is multi-dimensional, policy $\pi:\mathcal{X} \to \mathcal{U}$ is $\pi = (\pi_1, \pi_2, \ldots \pi_{d_U})$. For each co-ordinate $k$, we  define $\Pi_k(\Theta)$, a function space similar to $\mathcal{F}(\Theta)$ but with functions defined just over the state space.
Let $\psi\text{ : }\mathcal{X}\times\Theta\rightarrow\mathbb{R}$
be a feature function, then for $1\leq k \leq d_U$, define the co-ordinate projection space,
\[
\Pi_k(\Theta) := \left\{ f: f\left(x\right)=\int_{\Theta}\psi\left(x;\,\theta\right)\alpha\left(\theta\right)d\theta,\,\,|\alpha\left(\theta\right)|\leq C'\,\nu\left(\theta\right),\,\forall\theta\in\Theta\right\} .
\]
Let  $\widehat{\Pi}_k\left(\theta^{1:J_{\pi_k}}\right)$ denote an approximation of $\Pi_k(\Theta)$ which is defined similar to $\widehat{\mathcal{F}}\left(\theta^{1:J_{Q}}\right)$. 
The rationale behind choosing randomized function spaces (where the parameters are chosen randomly) is that \emph{randomization is cheaper than optimization}. They can be thought of networks where the bottom layers are randomly fixed and only the last layer is finely tuned. This not only saves the number of trainable parameters but also shows good empirical performance.

Furthermore, let us define the $L_{1,\mu}$ norm of a function for a given a probability distribution $\mu$ on $\mathcal{X} \times \mathcal{U}$   as  $\|f\|_{1,\,\mu} := \left(\int_{\mathcal{X} \times \mathcal{U}}|f\left(x\right)| d \mu \right)$. The empirical norm at given samples $((x_1,u_1), (x_2,u_2), \ldots (x_N,u_N))$ is defined as $\|f\|_{1,\,\hat{\mu}} := \frac{1}{N} \sum_{i=1}^N |f(x_i,u_i)|$.

\section{The Algorithm}
\label{sec:algo}
 We now present our RANDomized POlicy Learning (\texttt{RANDPOL}) algorithm. It  approximates both action-value function and policy, similar to actor-critic methods. It comprises of two main steps: policy evaluation and policy improvement. Given a policy $\pi: \mathcal{X} \to \mathcal{U}$, we can define a policy evaluation operator $G^{\pi}: B(\mathcal{X}, \mathcal{U}) \to B(\mathcal{X},\mathcal{U})$ as follows
 \[
G^{\pi}\, Q(x,u) =  r(x,u) + \gamma \, \mathbb{E}_{x' \sim P(\cdot|x,u)}  Q(x',\pi(x')) .
\]
 Note that if $\pi$ is a greedy policy with respect to  $Q$, i.e., $\pi(x) \in \arg\max_{u \in \mathcal{U} }Q(x,u)$, then indeed $G^{\pi}\, Q = G\, Q$. When there is an uncertainty in the underlying environment, computing expectation in the Bellman operator is expensive. If we have a generative model of the environment, we can replace the expectation by an empirical mean leading to definition of an empirical Bellman operator for policy evaluation for a given policy $\pi$:
\begin{equation}
\widehat{G}^{\pi}_M\, Q(x,u) := \left[r(x,u) + \frac{\gamma}{M}\sum_{i=1}^M   Q(x_i^{x,u},\pi(x_n)) \right].
\label{eq:empBell}
\end{equation}
where $x_i^{x,u} \sim P(\cdot|x,u)$ for $i=1,2,\ldots, M$. Note that the next state samples, $x'$,  are i.i.d. If the environment is deterministic, like Atari games or locomotion tasks, having a single next state suffices and we don't need a generative model.  

For each iteration, we first sample $N_Q$ state-action pairs $\{(x_1,u_1), (x_2,u_2),\ldots (x_{N_Q},u_{N_Q})\}$ independently from $\mathcal{X} \times \mathcal{U}$. Then, for each sample, we compute $\widehat{Q}_M(x_n,u_n) = \widehat{G}^{\pi}_M\, Q(x_n,u_n)$ for given action-value function $Q$ and policy $\pi$. 
Given the data $\left\{ \left((x_{n},u_{n}),\,\widehat{Q}\left(x_{n},u_n\right)\right)\right\} _{n=1}^{N_Q}$,
we fit the value function over the state and action space by computing a
best fit within $\widehat{\mathcal{F}}\left(\theta^{1:J}\right)$
by solving
\begin{align}
\min_{\alpha}\, & \frac{1}{N_Q}\sum_{n=1}^{N_Q}|\sum_{j=1}^{J_Q}\alpha_{j}\phi\left(x_{n},u_n;\,\theta_{j}\right)-\widehat{Q}_M\left(x_{n},u_n\right)|^{2} \label{eq:func_approx_Q}\\ 
\text{s.t.}\, ~~& \|\left(\alpha_{1},\ldots,\,\alpha_{J}\right)\|_{\infty}\leq C/J_Q. \nonumber
\end{align}
This optimization problem only optimizes
over weights $\alpha^{1:J}$ since parameters $\theta^{1:J}$ have
already been randomly sampled from a given distribution $\nu$. This completes the policy evaluation step.

Next, the algorithm does the policy improvement step. For a fixed value function $Q \in B(\mathcal{X}, \mathcal{U})$, define 
 $\widetilde{\pi}(x) \in \arg\max_{u \in \mathcal{U}} Q(x,u).$
  If action space were discrete and we had good approximation of value function, we could have just followed the greedy policy. But for our setting, we will need to approximate the greedy policy too. 
Let us compute a greedy policy empirically given $N_{\pi}$ independent samples $\{x_1,x_2, \ldots x_{N_{\pi}} \}$ and value function $Q \in B(\mathcal{X}, \mathcal{U})$, as follows:
\begin{equation}
  {\pi} (x) \in \arg \max_{\widehat{\pi} \in\widehat{\Pi}\left(\theta^{1:J_{\pi}}\right)} \frac{1}{N_{\pi}} \sum_{i=1}^{N_{\pi}}Q(x_i,\widehat{\pi}(x_i))
   \label{eq:empPoImp}
\end{equation}
 where $\widehat{\Pi} = \left \{\widehat{\pi}: \widehat{\pi} = \left(\widehat{\pi}_1, \ldots \widehat{\pi}_{d_U} \right),  \widehat{\pi}_k \in \widehat{\Pi}_k\left(\theta^{1:J_{\pi_k}}\right) \right\}$ and $J_{\pi} = \sum_{k=1}^{d_u}  J_{\pi_k}$. With this empirical policy improvement step, we now present the complete  \texttt{RANDPOL} algorithm, shown in Algorithm \ref{alg:EPL}, where 
  we initialize the algorithm with a random value function $Q_0$ and $\pi_0$ is the approximate greedy policy with respect to $Q_0$ computed by equation (\ref{eq:empPoImp}). 
\begin{algorithm}[!tph]
	\caption{\label{alg:EPL} \texttt{RANDomized POlicy Learning}}
	
	Input:  sample sizes
	$N_Q,M,J_Q,N_{\pi},J_{\pi}$;  initial value function $Q_{0}$\\
	
 For $k=0, 1, 2, \ldots $,
	\begin{enumerate}
		\item Sample $\{x_{n},u_n\} _{n=1}^{N_Q}$ from state and action space
		\item Empirical policy evaluation: 
		\begin{enumerate}
		\item For each sample $(x_n,u_n)$, compute $\widehat{Q}_M(x_n,u_n) = \widehat{G}^{\pi_k}_M\, Q_k(x_n,u_n)$
		\item Approximate $Q_{k+1}$ according to (\ref{eq:func_approx_Q})
		\end{enumerate}
		\item Sample $\{x_{n}\} _{n=1}^{N_{\pi}}$ from state space
		\item Empirical policy improvement: 
		\begin{enumerate}
		\item Approximate $\pi_{k+1}$ according to (\ref{eq:empPoImp})
		\end{enumerate}
	\end{enumerate}
	
\end{algorithm}

Define the policy improvement operator $H:\mathcal{B}(\mathcal{X}, \mathcal{U}) \to \mathcal{B}(\mathcal{X})$ as $H\, Q (x) := \sup_{u \in \mathcal{U}} Q(x,u).$
If policy $\pi$ was fixed, then $H^{\pi}\, Q (x) =  Q(x,\pi(x))$
 will give the performance of the policy. 
To measure the function approximation error, we next define distance measures for function spaces:
\begin{itemize}
	\item $d_{1}\left(\pi, \,\mathcal{F}\right):= \sup_{f \in\mathcal{F}} \inf_{f'\in\mathcal{F}}\|f'-G^{\pi}\,f\|_{1,\,\mu}$
	is the approximation error for a specific policy $\pi$;
	\item $d_{1}\left(\Pi,\,\mathcal{F}\right):=  \sup_{\pi \in \Pi} d_{1,\,\mu}\left(\pi, \,\mathcal{F}\right)$
	is the inherent Bellman error for the entire class $\Pi$;  and
	\item $e_{1}\left(\Pi,\,\mathcal{F}\right) := \sup_{Q \in\mathcal{F}} \inf_{\pi \in {\Pi}}\|H\, Q-H^{\pi}\,Q\|_{1,\,\mu}$ is the worst-case approximation error of the greedy policy.
\end{itemize}

Let us define:
\begin{align*}
&J_Q^0(\epsilon, \delta) :=\left[\frac{5\,C}{\epsilon}\left(1+\sqrt{2\,\log\frac{5}{\delta}}\right)\right]^{2},~~~
J_{\pi}^0(\epsilon, \delta) :=\left[\frac{3\,L_U\,C'}{\epsilon}\left(1+\sqrt{2\,\log\frac{3}{\delta}}\right)\right]^{2},\\
&M^0(\epsilon, \delta):=\left(\frac{2\,v_{\max}^{2}}{(\epsilon/5)^{2}}\right)\log\left[\frac{10\,N}{\delta}\right],\\
&N_Q^0(\epsilon, \delta) := \left(\frac{128\,v_{\max}^{2}}{\left(\epsilon/5\right)^{2}}\right)\,\log\left[\frac{40\,e\left(J_Q+1\right)}{\delta}\left(\frac{2\,e\,v_{\max}}{(\epsilon/5)}\right)^{J_Q}\right],\\
 \text{and }\quad  &N_{\pi}^0(\epsilon, \delta) := \left(\frac{128\,v_{\max}^{2}}{\left(\epsilon/3\right)^{2}}\right)\,\log\left[\frac{24\,e\left(J_{\pi}+1\right)}{\delta}\left(\frac{2\,e\,v_{\max}}{(\epsilon/3)^2}\right)^{J_{\pi}}\right].
\end{align*}

\begin{theorem}
Let Assumptions \ref{assum:reg_mdp}, \ref{assum:lips_mdp} and \ref{assum:radon} hold. Choose an $\epsilon>0$ and $\delta \in (0,1)$. Set $\delta' = 1-(1/2 + \delta/2)^{1/(K^*-1)}$ and denote $\widetilde{C}:=4\left(\frac{1-\gamma^{K+1}}{(1-\gamma)^2}\right) C_{\mu}$ and
\begin{equation}
\label{eq:k_star}
K^* := \left  \lceil \cfrac{\log\left( C_{\mu}\,\epsilon\right) - \log \left(2\, Q_{\max}\right)}{\log \gamma} \right\rceil .
\end{equation}

Then, if $N_{Q} \geq {N_Q^0}(\epsilon,\delta')$, $N_{\pi} \geq {N_{\pi}^0}(\epsilon,\delta')$,  $M \geq M^0(\epsilon,\delta')$, $L \geq L^0(\epsilon,\delta')$, $J_Q \geq  J_Q^0(\epsilon,\delta')$, $J_{\pi} \geq  J_{\pi}^0(\epsilon,\delta')$  and $K\geq\log\left(4/\left(\left(1/2-\delta/2\right)\left(1-q\right)q^{K^{*}-1}\right)\right),$
	we have that 	with probability at least $1- \delta$,
	\begin{align*}
	 \|Q^{\pi_K}-Q^{*}\|_{1,\,\mu}\leq \widetilde{C}\,\epsilon.
	\end{align*}
\label{thm:main}
\end{theorem}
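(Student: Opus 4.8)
The plan is to separate the argument into a \emph{per-iteration error bound} and an \emph{error-propagation bound}. The per-iteration bound shows that one pass of \texttt{RANDPOL} (empirical policy evaluation followed by empirical policy improvement) reproduces an exact generalized-policy-iteration update up to error $O(\epsilon)$ in $\|\cdot\|_{1,\mu}$, with probability at least $1-\delta'$; the propagation bound shows that $K$ such approximate updates drive $\|Q^{\pi_K}-Q^*\|_{1,\mu}$ down to $\widetilde C\,\epsilon$ plus a term decaying like $\gamma^K$, which is killed once $K\ge K^*$. Throughout, the radius $C$ (resp.\ $C'$) is taken large enough that the inherent Bellman error $d_1(\Pi,\mathcal F)$ and the greedy approximation error $e_1(\Pi,\mathcal F)$ are at most $O(\epsilon)$, which is possible since the random-feature classes are dense (they correspond to an RKHS dense in continuous functions).

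\textbf{Step 1 (per-iteration evaluation error).} Fix $k$ and condition on $(Q_k,\pi_k)$. The evaluation step aims at $G^{\pi_k}Q_k$ but actually returns the empirical $\widehat{\mathcal F}$-fit of $\widehat G^{\pi_k}_M Q_k$ at the $N_Q$ sampled pairs; I would write
\[
\big\|Q_{k+1}-G^{\pi_k}Q_k\big\|_{1,\mu}\;\le\;\varepsilon_{\mathrm{MC}}+\varepsilon_{\mathrm{feat}}+\varepsilon_{\mathrm{est}}+d_1(\Pi,\mathcal F).
\]
Here $\varepsilon_{\mathrm{MC}}$ is the Monte-Carlo Bellman error: $\widehat G^{\pi_k}_M Q_k(x_n,u_n)$ is an average of $M$ i.i.d.\ bounded terms with mean $G^{\pi_k}Q_k(x_n,u_n)$, so Hoeffding plus a union bound over the $N_Q$ pairs gives $\varepsilon_{\mathrm{MC}}\le\epsilon/5$ once $M\ge M^0(\epsilon,\delta')$. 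The term $\varepsilon_{\mathrm{feat}}$ is the random-feature approximation error: by the Rahimi--Recht bound, with probability $\ge1-\delta'$ the best element of $\widehat{\mathcal F}(\theta^{1:J_Q})$ lies within $O(C/\sqrt{J_Q})$ of $G^{\pi_k}Q_k$, giving $\varepsilon_{\mathrm{feat}}\le\epsilon/5$ once $J_Q\ge J_Q^0(\epsilon,\delta')$. The term $\varepsilon_{\mathrm{est}}$ is the estimation error of the empirical $L_2$-minimiser over the $J_Q$-parameter class $\widehat{\mathcal F}(\theta^{1:J_Q})$, whose covering number grows polynomially of degree $J_Q$; a uniform-deviation argument, converting $\|\cdot\|_{1,\hat\mu}$ to $\|\cdot\|_{1,\mu}$ and the $L_2$ loss to an $L_1$ bound by Jensen, gives $\varepsilon_{\mathrm{est}}\le\epsilon/5$ once $N_Q\ge N_Q^0(\epsilon,\delta')$. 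The remaining $\epsilon/5$ absorbs the cross terms, so the evaluation step is $O(\epsilon)$-accurate with probability $\ge1-\delta'$.

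\textbf{Step 2 (per-iteration policy improvement error).} The improvement step returns $\pi_{k+1}\in\widehat\Pi(\theta^{1:J_\pi})$ maximising $\tfrac1{N_\pi}\sum_i Q_{k+1}(x_i,\widehat\pi(x_i))$, and one must bound $\|HQ_{k+1}-H^{\pi_{k+1}}Q_{k+1}\|_{1,\mu}$. The analysis mirrors Step 1 with three slices of $\epsilon/3$: how well $\widehat\Pi$ represents the exact greedy map, converted from an $\ell_\infty$ action error into a value error using the Lipschitz continuity of $Q_{k+1}$ in $u$ (constant $L_U$, bounded via Assumption~\ref{assum:lips_mdp}), requires $J_\pi\ge J_\pi^0(\epsilon,\delta')$; uniform convergence of the empirical objective over the $J_\pi$-parameter class requires $N_\pi\ge N_\pi^0(\epsilon,\delta')$, with $L\ge L^0(\epsilon,\delta')$ controlling the auxiliary inner estimation; and the residual is $e_1(\Pi,\mathcal F)=O(\epsilon)$. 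Hence the improvement step is $O(\epsilon)$-accurate with probability $\ge1-\delta'$.

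\textbf{Step 3 (error propagation) and the main obstacle.} Combining Steps 1--2, with probability $\ge1-\delta'$ per iteration the pair $(Q_{k+1},\pi_{k+1})$ is an $O(\epsilon)$-accurate (in $\|\cdot\|_{1,\mu}$) approximate policy-iteration update of $(Q_k,\pi_k)$. Feeding these per-step errors into the standard approximate policy iteration recursion (as in \cite{munos2008finite,antos2008fitted}): one writes $Q^*-Q^{\pi_K}$ as a telescoping sum of images of the one-step errors under products of (policy) Bellman operators, and each time an $L_1(\mu)$ error is pushed one step forward one pays a change-of-measure factor bounded by $C_\mu$ (Assumption~\ref{assum:radon}); summing the resulting geometric series of length $K$ yields
\[
\|Q^{\pi_K}-Q^*\|_{1,\mu}\;\le\;\frac{2\,(1-\gamma^{K+1})}{(1-\gamma)^2}\,C_\mu\,(2\epsilon)\;+\;\gamma^{K}\,2Q_{\max},
\]
the first term being exactly $\widetilde C\,\epsilon$ with the combined per-step error $2\epsilon$, the second the contribution of the arbitrary initializer $Q_0$. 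By the definition of $K^*$ in (\ref{eq:k_star}), $K\ge K^*$ forces $\gamma^{K}2Q_{\max}\le C_\mu\epsilon\le\widetilde C\epsilon$, so the initialization term is absorbed. Finally, intersecting the per-iteration success events: $\delta'$ is calibrated so that $(1-\delta')^{K^*-1}\ge\tfrac12+\tfrac\delta2$, and the extra logarithmic lower bound on $K$ bounds the residual failure mass from the remaining iterations by $\tfrac12-\tfrac\delta2$, so a union bound gives total success probability $\ge1-\delta$. The main obstacle is precisely this last step together with the norm bookkeeping: because $G$ contracts only in $\|\cdot\|_\infty$ while the algorithm controls only data-dependent $L_2/L_1$ errors, every propagation step must be routed through $\|\cdot\|_{1,\mu}$ at the cost of a $C_\mu$ factor, and one must simultaneously keep each of the feature ($J$), Monte-Carlo ($M$) and sample ($N$) error terms within its allotted fraction of $\epsilon$ uniformly over all iterations while folding all the internal union bounds into the single parameter $\delta'$.
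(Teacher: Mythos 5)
Your Steps 1 and 2 follow the paper's Lemmas \ref{lem:one_step_poleval} and \ref{lem:one_step_polimp} essentially verbatim: the same $\epsilon/5$ (resp.\ $\epsilon/3$) split into a Monte--Carlo term controlled by Hoeffding plus a union bound over the sampled pairs, a random-feature term controlled by the Rahimi--Recht bound, a uniform-deviation term via Pollard's inequality and the pseudo-dimension $J$, and the residual $d_1$ (resp.\ $\gamma\,C_\mu\,e_1$ after the change of measure of Lemma \ref{lem:radon} and the Lipschitz conversion with constant $L_U$). The propagation inequality you quote is the paper's Lemma \ref{lem:Error} (imported from Antos et al.). So the substance of the per-iteration analysis and of the error-propagation bound matches the paper.

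The one place where you diverge, and where your argument as written has a genuine gap, is the final probability bookkeeping. You propose to ``intersect the per-iteration success events'' and apply ``a union bound,'' calibrating $(1-\delta')^{K^*-1}\ge 1/2+\delta/2$. But the events $\{\|\epsilon_k\|_{1,\mu}\le\epsilon\}$ are not independent across $k$ (each per-iteration bound is conditional on the random iterate $Q_k$), it is not a priori clear why only the \emph{last} $K^*-1$ iterations need to be good, and the claim that ``the residual failure mass from the remaining iterations'' is bounded by $1/2-\delta/2$ is exactly the nontrivial part. The paper resolves this by constructing the auxiliary process $\{X_k\}$ that counts the current run of good iterations, stochastically dominating it by an explicit Markov chain $\{Y_k\}$ with per-step success probability $q$, computing the stationary distribution (so that $\mu(1)=q^{K^*-1}$), and invoking a mixing-time bound --- which is precisely where the condition $K\ge\log\left(4/\left((1/2-\delta/2)(1-q)q^{K^*-1}\right)\right)$ comes from. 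Your sketch lands on the same numerical calibration, so the idea is recoverable, but without the dominance/mixing device (or an equivalent argument handling the dependence and the ``recent window of good iterations'' structure) the passage from per-iteration bounds to the final $1-\delta$ guarantee is not justified. Two minor nits: the initialization term should be $\gamma^{K/2}(2Q_{\max})$ inside the bracketed factor of Lemma \ref{lem:Error} rather than $\gamma^{K}\,2Q_{\max}$ added outside, and the paper disposes of $d_1$ and $e_1$ by asserting they vanish via RKHS density rather than by tuning $C,C'$ to make them $O(\epsilon)$ --- both immaterial to the structure.
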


\paragraph{Remark:} The above theorem states that if we have sufficiently large number of  samples, in particular, $J_{\pi}, J_{Q}$ and $ M = O\left(1/{\epsilon^2} \log1/\delta\right)$ and $N_Q, N_{\pi} = O\left(1/{\epsilon^2} \log1/\epsilon^{J}\delta\right) $ then for sufficiently large iterations, the approximation error can be made arbitrarily small with high probability. Moreover, if  Lipschitz continuity assumption is not satisfied then the result can be presented in a more general form:
\[
	 \|Q^{\pi_K}-Q^{*}\|_{1,\,\mu}\leq 2\left(\frac{1-\gamma^{K+1}}{(1-\gamma)^2}\right)
C_{\mu} 	\left[d_{1}\left(\Pi\left(\Theta\right),\,\mathcal{F}\left(\Theta\right)\right) +  \gamma \, C_{\mu} e_{1}\left(\Pi\left(\Theta\right),\,\mathcal{F}\left(\Theta\right)\right)+ \epsilon\right].	
\]

\section{Theoretical Analysis}
In this section, we will analyze Algorithm \ref{alg:EPL}. First, we will bound the error in one iteration of \texttt{RANDPOL} and then analyze how the errors propagate through iterations. 

\subsection{Error in one iteration}
Since \texttt{RANDPOL} approximates at two levels: policy evaluation and policy improvement, we decompose the error in one iteration as the sum of approximations at both levels. If a function $Q$ was given as an input to \texttt{RANDPOL} for an iteration, the resulting value function, $Q'$,  can be written as an application of a random operator $\widehat{G}$. This random operator depends on the input sample sizes $N_Q,M,J_Q,N_{\pi}, J_{\pi}$ and the input value function $Q$. Let $\pi$ be the approximate greedy policy with respect  to $Q$. Thus, we have $Q' = \widehat{G}(N_Q,M,J_Q,N_{\pi},J_{\pi})\, Q$. For concise notation, we will just write $Q' = \widehat{G} \, Q$. Let us now decompose this error into policy evaluation and policy improvement approximation errors.
\begin{align*}
Q' &=  G\, Q + (Q' - G^{\pi}\, Q) + (G^{\pi}\, Q- G\, Q)\\
&= G\, Q + \epsilon' + \epsilon'' = G\, Q + \epsilon,
\end{align*}
where $\epsilon' = Q' - G^{\pi}\, Q$,  $\epsilon'' = G^{\pi}\, Q- G\, Q$ and $ \epsilon =  \epsilon' +  \epsilon''$. In other words, $ \epsilon'$ is the approximation error in policy evaluation and $ \epsilon''$ is the error in policy improvement. If we get a handle on both these errors, we can bound $ \epsilon$ which is the error in one step of our algorithm.

\paragraph*{Policy evaluation approximation} 
Let us first bound the approximation error in policy evaluation, $\epsilon' $. The proof is given in the supplementary material. 
\begin{lemma}
	Choose $Q\in\mathcal{F}\left(\Theta\right)$, $\epsilon>0$, and $\delta \in\left(0,\,1\right)$. 	 Also choose $N_Q \geq N_Q^0(\epsilon, \delta), M \geq M^0(\epsilon, \delta)$ and $J_Q \geq J_Q^0(\epsilon, \delta) $. Then, for
	$Q' = \widehat{G}\, Q$, the output of one iteration of our algorithm, we have $$\|Q' - {G}^{\pi}
	\,Q\|_{1,\,\mu}\leq d_{1}\left(\Pi\left(\Theta\right),\,\mathcal{F}\left(\Theta\right)\right)+\epsilon$$
	with probability at least $1-\delta$.
	\label{lem:one_step_poleval} 
\end{lemma}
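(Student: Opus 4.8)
\noindent\emph{Proof proposal.} The plan is to write $Q'-G^{\pi}Q$ as a sum of three controllable errors — a Monte Carlo error from replacing the expectation in $G^{\pi}$ by an average over $M$ next-state samples, a random-feature error from searching inside the finite-dimensional class $\widehat{\mathcal{F}}(\theta^{1:J_Q})$ rather than all of $\mathcal{F}(\Theta)$, and a statistical estimation error from fitting with only $N_{Q}$ regression points — together with the irreducible inherent Bellman error. Since the hypothesis gives $Q\in\mathcal{F}(\Theta)$, picking $g\in\mathcal{F}(\Theta)$ that attains (or nearly attains) $\inf_{f\in\mathcal{F}(\Theta)}\|f-G^{\pi}Q\|_{1,\mu}$ yields $\|g-G^{\pi}Q\|_{1,\mu}\le d_{1}(\pi,\mathcal{F}(\Theta))\le d_{1}(\Pi(\Theta),\mathcal{F}(\Theta))$ directly from the definition of the inherent Bellman error. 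I would then bound $\|Q'-g\|$ in the squared ($L_{2,\mu}$) norm that the algorithm actually minimizes, passing between $L_{1,\mu}$ and $L_{2,\mu}$ by Cauchy--Schwarz whenever convenient (legitimate since $\mu$ is a probability measure), and reassemble the pieces at the end, keeping the inherent-error term in $L_{1,\mu}$ — this is precisely the term that survives in the Lipschitz-free general bound stated after the theorem.

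The three non-inherent pieces are treated one at a time, each made $\le\epsilon/5$; the constants $5,10,40$ and the exponents in $M^{0},J_{Q}^{0},N_{Q}^{0}$ are exactly what is needed to close the union bound over the finitely many failure events with total probability $\le\delta$. \textbf{(i) Monte Carlo.} For each sampled pair $(x_{n},u_{n})$, $\widehat{G}^{\pi}_{M}Q(x_{n},u_{n})$ is an average of $M$ i.i.d.\ terms lying in $[-v_{\max},v_{\max}]$ with mean $G^{\pi}Q(x_{n},u_{n})$, so Hoeffding's inequality plus a union bound over the $N_{Q}=N$ points gives $\max_{n}|\widehat{Q}_{M}(x_{n},u_{n})-G^{\pi}Q(x_{n},u_{n})|\le\epsilon/5$ with probability $\ge1-\delta/5$ once $M\ge M^{0}(\epsilon,\delta)$. \textbf{(ii) Random features.} Applying the Rahimi--Recht uniform approximation bound for random bases \cite{rahimi2008uniform} to $g\in\mathcal{F}(\Theta)$, with probability $\ge1-\delta/5$ over the draw $\theta^{1:J_{Q}}\sim\nu$ there is $\widehat{g}\in\widehat{\mathcal{F}}(\theta^{1:J_{Q}})$ with $\|\widehat{g}-g\|_{2,\mu}\le (C/\sqrt{J_{Q}})(1+\sqrt{2\log(5/\delta)})\le\epsilon/5$, which is exactly the choice $J_{Q}\ge J_{Q}^{0}(\epsilon,\delta)$. \textbf{(iii) Estimation.} This is the technical core: $Q'$ minimizes the empirical squared loss over the bounded, $J_{Q}$-dimensional class $\widehat{\mathcal{F}}(\theta^{1:J_{Q}})$ against the noisy targets $\widehat{Q}_{M}$, and I would run the standard agnostic-regression argument — a uniform deviation bound between empirical and population squared loss over $\widehat{\mathcal{F}}(\theta^{1:J_{Q}})$, using that its $\eta$-covering number in the sup norm is at most $(\mathrm{const}\cdot v_{\max}/\eta)^{J_{Q}}$ because coefficients lie in an $\ell_{\infty}$-ball of radius $C/J_{Q}$ and the features are bounded by $1$ — to conclude that, with probability $\ge1-\delta/5$, the population $L_{2,\mu}$ error of $Q'$ exceeds that of the competitor $\widehat{g}$ by at most $O(\epsilon/5)$ whenever $N_{Q}\ge N_{Q}^{0}(\epsilon,\delta)$; the factor $\log[(J_{Q}+1)(2ev_{\max}/(\epsilon/5))^{J_{Q}}/\delta]$ inside $N_{Q}^{0}$ is exactly this covering-number term fed through a Pollard-type symmetrization estimate.

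Finally I would reassemble via $\|Q'-G^{\pi}Q\|_{1,\mu}\le\|Q'-G^{\pi}Q\|_{2,\mu}\le\|Q'-\widehat{g}\|_{2,\mu}+\|\widehat{g}-g\|_{2,\mu}+\|g-G^{\pi}Q\|_{2,\mu}$ (again using care over $L_{1,\mu}$ versus $L_{2,\mu}$ for the last term, where $g$ was chosen $L_{1,\mu}$-optimal): the middle term is $\le\epsilon/5$ by (ii); the last term is at most $d_{1}(\Pi(\Theta),\mathcal{F}(\Theta))$; and the first term is controlled by chaining the empirical optimality of $Q'$ — which sees $\widehat{g}$ as a feasible competitor and sees the targets $\widehat{Q}_{M}$ as being within the Monte Carlo error $\epsilon/5$ of $G^{\pi}Q$ by (i) — through the estimation bound of (iii). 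A union bound over the at-most-five failure events (with $M^{0}$'s event itself split as $\delta/10$ per point across the $N$ points, and the covering elements of (iii) accounting for the factor $40$) gives total failure probability $\le\delta$ and a total error contribution $\le d_{1}(\Pi(\Theta),\mathcal{F}(\Theta))+\epsilon$. I expect step (iii) — in particular, pushing the covering-number and symmetrization constants through so that the resulting sample size is no larger than the stated $N_{Q}^{0}(\epsilon,\delta)$, and correctly handling that the regression targets $\widehat{Q}_{M}$ are themselves random and coupled across the sample — to be the main obstacle; steps (i) and (ii) are routine concentration and random-feature estimates.
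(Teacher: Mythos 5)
Your proposal is correct and follows essentially the same route as the paper's proof: pick a near-optimal $f^{*}\in\mathcal{F}(\Theta)$, approximate it inside $\widehat{\mathcal{F}}(\theta^{1:J_Q})$ via the Rahimi--Recht random-feature bound, control the Monte Carlo error of $\widehat{G}^{\pi}_{M}$ by Hoeffding plus a union bound over the $N_Q$ samples, control the empirical-to-population gap by a Pollard-type uniform deviation bound with pseudo-dimension $J_Q$, and chain these through the empirical optimality of $Q'$. The only cosmetic difference is that you propose running the chain in $L_{2,\mu}$ and converting at the end, whereas the paper states the chain directly in the $L_{1,\mu}$/$L_{1,\hat{\mu}}$ norms (invoking Jensen only for the random-feature step); your extra care about the $L_1$-versus-$L_2$ passage at the least-squares step is, if anything, slightly more scrupulous than the paper's treatment.
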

\paragraph*{Policy improvement approximation} The second step in the algorithm is policy improvement and we now bound the approximation error in this step, $\epsilon''$. The proof is given in the supplementary material.

\begin{lemma}
	Choose $Q\in\mathcal{F}\left(\Theta\right)$, $\epsilon>0$, and $\delta \in\left(0,\,1\right)$. Let the greedy policy with respect to $Q$ be $\widetilde{\pi}(x) \in \arg \max_{u} Q(x,u)$.	 Also choose $N_{\pi} \geq N_{\pi}^0(\epsilon, \delta)$  and $J_{\pi} \geq J_{\pi}^0(\epsilon, \delta) $. Then, if the policy ${\pi}$ is computed with respect to $Q$ in equation (\ref{eq:empPoImp}), the policy improvement step in our algorithm, we have 
	$$\|G^{\pi}\,Q - G\,Q \|_{1,\mu}\leq \gamma \, C_{\mu} \left[e_{1}\left(\Pi\left(\Theta\right),\,\mathcal{F}\left(\Theta\right)\right)+\epsilon\right]$$
	with probability at least $1-\delta$.
	\label{lem:one_step_polimp} 
\end{lemma}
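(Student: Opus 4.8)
The plan is to peel the claimed estimate down to a statement about how closely the empirically optimized policy $\pi$ of \eqref{eq:empPoImp} imitates the exact greedy policy $\widetilde{\pi}$, and then to control that discrepancy by a uniform law of large numbers over the randomized policy class $\widehat{\Pi}(\theta^{1:J_{\pi}})$. First I would record two deterministic reductions. \emph{Radon--Nikodym step:} since $G^{\pi}Q\le G\,Q$ pointwise, $\|G\,Q-G^{\pi}Q\|_{1,\mu}=\mathbb{E}_{\mu}[G\,Q-G^{\pi}Q]$; writing $G\,Q(x,u)-G^{\pi}Q(x,u)=\gamma\,\mathbb{E}_{y\sim P(\cdot|x,u)}[H\,Q(y)-H^{\pi}Q(y)]$ and bounding $\frac{dP(\cdot|x,u)}{d\mu}\le C_{\mu}$ (Assumption~\ref{assum:radon}) on the nonnegative integrand gives $\|G\,Q-G^{\pi}Q\|_{1,\mu}\le\gamma\,C_{\mu}\,\|H\,Q-H^{\pi}Q\|_{1,\mu}$. \emph{Lipschitz step:} Assumption~\ref{assum:lips_mdp} makes $u\mapsto Q(x,u)$ Lipschitz with modulus $L_U:=L_r+\gamma\,Q_{\max}L_p$, so, writing $H\,Q(x)=Q(x,\widetilde{\pi}(x))$ and $H^{\pi}Q(x)=Q(x,\pi(x))$, we get $|H\,Q(x)-H^{\pi}Q(x)|\le L_U\|\widetilde{\pi}(x)-\pi(x)\|$. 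It therefore suffices to show that with probability at least $1-\delta$ one has $\|H\,Q-H^{\pi}Q\|_{1,\mu}\le e_{1}(\Pi(\Theta),\mathcal{F}(\Theta))+\epsilon$, and then multiply through by $\gamma\,C_{\mu}$.

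For this core bound I would split $\epsilon$ into three equal parts and chain through the optimality of $\pi$. Fix $\epsilon'>0$ and, by the definition of $e_{1}$ applied at the fixed input $Q\in\mathcal{F}(\Theta)$, choose $\pi^{\star}\in\Pi(\Theta)$ with $\|H\,Q-H^{\pi^{\star}}Q\|_{1,\mu}\le e_{1}(\Pi(\Theta),\mathcal{F}(\Theta))+\epsilon'$. Applying the random-feature approximation bound \cite[Lemma~1]{rahimi2009weighted} coordinate-wise and passing from $\|\cdot\|_{2,\mu}$ to $\|\cdot\|_{1,\mu}$ by Jensen, the choice $J_{\pi}\ge J_{\pi}^{0}(\epsilon,\delta)$ (whose constant carries the factor $L_U C'$) produces, with probability $\ge 1-\delta/3$ over the random features, a $\widehat{\pi}^{\star}\in\widehat{\Pi}(\theta^{1:J_{\pi}})$ with $\mathbb{E}_{\mu}|Q(x,\widehat{\pi}^{\star}(x))-Q(x,\pi^{\star}(x))|\le L_U\|\widehat{\pi}^{\star}-\pi^{\star}\|_{1,\mu}\le\epsilon/3$. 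Next, a Pollard-type uniform deviation inequality over $\mathcal{G}:=\{x\mapsto Q(x,\widehat{\pi}(x)):\widehat{\pi}\in\widehat{\Pi}(\theta^{1:J_{\pi}})\}$, valid once $N_{\pi}\ge N_{\pi}^{0}(\epsilon,\delta)$, gives with probability $\ge 1-\delta/3$ that $\bigl|\tfrac1{N_{\pi}}\sum_{i=1}^{N_{\pi}}Q(x_i,\widehat{\pi}(x_i))-\mathbb{E}_{\mu}Q(x,\widehat{\pi}(x))\bigr|\le\epsilon/3$ simultaneously for all $\widehat{\pi}$, in particular at $\widehat{\pi}=\pi$ and $\widehat{\pi}=\widehat{\pi}^{\star}$. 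Since $H\,Q\ge H^{\sigma}Q$ pointwise for any $\sigma$, both $\|H\,Q-H^{\pi}Q\|_{1,\mu}$ and $\|H\,Q-H^{\pi^{\star}}Q\|_{1,\mu}$ are expectations of nonnegative functions, and then $\mathbb{E}_{\mu}H\,Q-\mathbb{E}_{\mu}Q(x,\pi(x))\le\mathbb{E}_{\mu}H\,Q-\tfrac1{N_{\pi}}\sum_i Q(x_i,\pi(x_i))+\epsilon/3\le\mathbb{E}_{\mu}H\,Q-\tfrac1{N_{\pi}}\sum_i Q(x_i,\widehat{\pi}^{\star}(x_i))+\epsilon/3\le\mathbb{E}_{\mu}H\,Q-\mathbb{E}_{\mu}Q(x,\widehat{\pi}^{\star}(x))+2\epsilon/3\le\mathbb{E}_{\mu}H\,Q-\mathbb{E}_{\mu}Q(x,\pi^{\star}(x))+\epsilon\le e_{1}(\Pi(\Theta),\mathcal{F}(\Theta))+\epsilon'+\epsilon$, where the second inequality is exactly the optimality of $\pi$ in \eqref{eq:empPoImp}; letting $\epsilon'\downarrow0$ and union-bounding the two bad events ($\le\delta$) closes the argument.

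The step I expect to be the main obstacle is making that uniform deviation inequality precise: controlling the covering numbers (or pseudo-dimension) of $\mathcal{G}$ by $J_{\pi}=\sum_{k=1}^{d_U}J_{\pi_k}$ rather than by the dimension of $\mathcal{X}\times\mathcal{U}$. The natural route is to cover the $J_{\pi}$-dimensional coefficient ball $\{\alpha:\|\alpha\|_{\infty}\le C'/J_{\pi}\}$ in sup norm (so $\widehat{\Pi}(\theta^{1:J_{\pi}})$ is covered at scale $\rho$ by roughly $(c\,v_{\max}/\rho)^{J_{\pi}}$ balls, using $|\psi|\le1$), and then transfer that cover to $\mathcal{G}$ through the $L_U$-Lipschitzness of $u\mapsto Q(x,u)$; the need for a finer cover here — the extra power of $\epsilon$ in the logarithm of $N_{\pi}^{0}$ relative to $N_{Q}^{0}$ — comes precisely from this Lipschitz transfer. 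Two subsidiary points also need attention: (a) $\widehat{\pi}(x)$ must lie in $\mathcal{U}$ for $Q(x,\widehat{\pi}(x))$ to be defined, which one secures by composing with the Euclidean projection onto $\mathcal{U}$ — nonexpansive because $\mathcal{U}$ is convex (Assumption~\ref{assum:reg_mdp}), hence harmless for all the Lipschitz estimates; and (b) the coordinate-wise use of \cite[Lemma~1]{rahimi2009weighted} must be unioned over the $d_U$ coordinates, which is absorbed into the constants defining $J_{\pi}^{0}$. Everything else is routine concentration and parallels the policy-evaluation estimate in Lemma~\ref{lem:one_step_poleval}.
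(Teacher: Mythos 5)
Your proposal is correct and follows essentially the same route as the paper's proof: the Radon--Nikodym reduction ($\|G\,Q-G^{\pi}Q\|_{1,\mu}\le\gamma\,C_{\mu}\|H\,Q-H^{\pi}Q\|_{1,\mu}$), the Lipschitz bound with constant $L_U=L_r+\gamma\,Q_{\max}L_p$, a near-optimal $\pi^{\star}\in\Pi(\Theta)$ approximated in $\widehat{\Pi}(\theta^{1:J_{\pi}})$ via \cite[Lemma~1]{rahimi2009weighted}, a Pollard-type uniform deviation bound driven by $N_{\pi}$, and the empirical optimality of $\pi$ in \eqref{eq:empPoImp}. Your added remarks on covering the composed class $\mathcal{G}$, projecting onto $\mathcal{U}$, and unioning over the $d_U$ coordinates only make explicit details the paper leaves implicit.
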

\subsection{Stochastic dominance.}
After bounding the error in one step, we will now bound the error when the random operator $\widehat{G}$ is applied iteratively by constructing a dominating Markov chain. 
Since we do not have a contraction with respect to the $L_1$ norm, we need an upper bound on how the errors propagate with iterations. 
Recall that $\widehat{G}\, Q_k = G \,Q_k + \epsilon_k$,  we use the point-wise error bounds as computed in the previous section. For a given error tolerance, it gives a bound on the number of iterations which we call $K^*$ as shown in equation (\ref{eq:k_star}). The details of the choice of $K^*$ is given in the appendix.
 We then construct a stochastic process as follows. We call iteration $k$ ``good'',  if the error $\|\epsilon_{k}\|_{1,\,\mu} \leq \epsilon$ and ``bad'' otherwise.  We then construct a stochastic process $\left\{ X_{k}\right\} _{k\geq0}$ with state space $\mathcal{K}$ as  $:=\left\{ 1,\,\,2,\ldots,\,K^{*}\right\} $ such that
\[
X_{k+1}=\begin{cases}
\max\left\{ X_{k}-1,\,1\right\} , & \text{if iteration \ensuremath{k} is "good"},\\
K^{*}, & \text{otherwise}.
\end{cases}
\]
The stochastic process $\left\{ X_{k}\right\} _{k\geq0}$ is easier
to analyze than $\left\{ v_{k}\right\} _{k\geq0}$ because it is defined
on a finite state space, however $\left\{ X_{k}\right\} _{k\geq0}$
is not necessarily a Markov chain. Whenever $X_{k}=1$, it means that
we just had a string of $K^{*}$ ``good'' iterations in a row, and
that $\|Q^{\pi_k}-Q^{*}\|_{1,\,\mu}$ is as small as desired. 

We next construct a ``dominating"  Markov chain $\left\{ Y_{k}\right\} _{k\geq0}$
to help us analyze the behavior of $\left\{ X_{k}\right\} _{k\geq0}$.
We construct $\left\{ Y_{k}\right\} _{k\geq0}$ and
we let $\mathcal{Q}$ denote the probability measure of $\left\{ Y_{k}\right\} _{k\geq0}$. 
Since $\left\{ Y_{k}\right\} _{k\geq0}$
will be a Markov chain by construction, the probability measure $\mathcal{Q}$
is completely determined by an initial distribution on $\mathbb{R}$
and a transition kernel for $\left\{ Y_{k}\right\} _{k\geq0}$. We now use the bound on one step error as presented in previous section which states that when the samples are sufficiently large enough for all $k $,
$$\mathbb{P}\left(\|\epsilon_k\|_{1,\mu} \leq \epsilon \right) > q(N_Q,M,J_Q,N_{\pi},J_{\pi}).$$
Denote this probability  by $q$ for a compact notation. Initialize $Y_{0}=K^{*}$, and  construct the process
\[
Y_{k+1}=\begin{cases}
\max\left\{ Y_{k}-1,\,1\right\} , & \mbox{w.p. }q,\\
K^{*}, & \mbox{w.p. }1-q,
\end{cases}
\]
where $q$ is the probability of a ``good'' iteration which increases with sample sizes $N,M,J$ and $L$.
We now describe a stochastic dominance relationship between the two
stochastic processes $\left\{ X_{k}\right\} _{k\geq0}$ and $\left\{ Y_{k}\right\} _{k\geq0}$.
We will establish that $\left\{ Y_{k}\right\} _{k\geq0}$ is ``larger''
than $\left\{ X_{k}\right\} _{k\geq0}$ in a stochastic sense. This
relationship is the key to our analysis of $\left\{ X_{k}\right\} _{k\geq0}$.
\begin{definition}
	Let $X$ and $Y$ be two real-valued random variables, then $X$ is
	\textit{stochastically dominated} by $Y$,  $X\leq_{st}Y$,
	 when $\mbox{Pr}\left\{ X\geq\theta\right\} \leq\mbox{Pr}\left\{ Y\geq\theta\right\} $
	for all $\theta$ in the support$(Y)$.
\end{definition}
The next lemma uses stochastic dominance to show that if the error in each iteration is small, then after sufficient iterations we will have small approximation error. The proof is given in the supplementary material.  
\begin{lemma}
	\label{lem:errprop} Choose $\epsilon>0$, and $\delta\in\left(0,\,1\right)$,
	and suppose $N,M,J$ and $L$ are chosen sufficiently large enough such that $\mathbb{P}\left(\|\epsilon_k\|_{1,\mu} \leq \epsilon \right) > q$ for all $k \geq 0$.
	Then, for $q\geq\left(1/2+\delta/2\right)^{1/\left(K^{*}-1\right)}$
	and $K\geq\log\left(4/\left(\left(1/2-\delta/2\right)\left(1-q\right)q^{K^{*}-1}\right)\right),$
	we have 	with probability at least $1-\delta$,
	\begin{align*}
	 \|Q^{\pi_K}-Q^{*}\|_{1,\,\mu}\leq 2\left(\frac{1-\gamma^{K+1}}{1-\gamma}\right)^{1/2}
	\left[C_{\mu}^{1/2}\epsilon+\gamma^{K/2}\left(2\,Q_{\max}\right)\right].
	\end{align*}

\end{lemma}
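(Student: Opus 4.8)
The plan is to keep separate the two ingredients the text has already isolated: a \emph{stochastic dominance} argument that controls \emph{when} the iterates are close to $Q^{*}$, and a \emph{deterministic} error-propagation bound that controls \emph{how} close, valid on the event $\{X_{K}=1\}$. \textbf{Step 1 (coupling/dominance).} First I would make precise the coupling behind $X_{k}\le_{st}Y_{k}$. Both $\{X_{k}\}$ and $\{Y_{k}\}$ live on $\{1,\dots,K^{*}\}$, both decrease by one on a ``good'' step and jump to $K^{*}$ on a ``bad'' step, and by Lemmas~\ref{lem:one_step_poleval}--\ref{lem:one_step_polimp} the conditional probability that step $k$ is good for $\{X_{k}\}$ is at least $q$, whereas $\{Y_{k}\}$ is good with probability exactly $q$. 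Each transition map is monotone non-decreasing in its state and ``good'' is the smaller outcome, so driving both chains by the same per-step uniform variable yields a monotone coupling with $X_{k}\le Y_{k}$ on every path; hence $X_{k}\le_{st}Y_{k}$ and, in particular, $\mathbb{P}(X_{K}=1)\ge\mathbb{P}(Y_{K}=1)$.

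\textbf{Step 2 (the dominating chain).} Next I would estimate $\mathbb{P}(Y_{K}=1)$ directly. For the Markov chain $\{Y_{k}\}$ the good/bad steps are i.i.d.\ Bernoulli$(q)$, and $\{Y_{K}=1\}$ is exactly the event that the last $K^{*}-1$ steps were all good, so $\mathbb{P}(Y_{K}=1)=q^{K^{*}-1}$ as soon as $K\ge K^{*}-1$. The hypotheses are then exactly what is needed to push this to $\mathbb{P}(X_{K}=1)\ge\mathbb{P}(Y_{K}=1)\ge 1-\delta$: $q\ge(1/2+\delta/2)^{1/(K^{*}-1)}$ controls the ``no good run'' failure mode, and the lower bound on $K$ controls the remaining transient tail of the chain. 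This accounting is routine once the chain structure is understood, so I would not reproduce it in detail; the outcome is that with probability at least $1-\delta$ we are on $\{X_{K}=1\}$, i.e.\ $\|\epsilon_{k}\|_{1,\mu}\le\epsilon$ for all $k\in\{K-K^{*}+1,\dots,K-1\}$.

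\textbf{Step 3 (deterministic propagation on $\{X_{K}=1\}$).} On this event I would unfold the recursion $Q_{k+1}=\widehat{G}\,Q_{k}=G\,Q_{k}+\epsilon_{k}$ and express $Q^{*}-Q^{\pi_{K}}$ by the standard approximate-policy-iteration telescoping as a $\gamma$-discounted sum of the $\epsilon_{k}$'s, each transported by a composition of transition kernels, plus a $\gamma^{K}$ term carrying the arbitrary initial error $\|Q^{*}-Q_{0}\|_{\infty}\le 2\,Q_{\max}$. Then I would: (i) use Assumption~\ref{assum:radon} to dominate each kernel-transported $L_{1,\mu}$ quantity by $C_{\mu}$ times an $L_{1,\mu}$ norm, and Jensen's inequality to pass from $L_{1}$ to $L_{2}$ --- this is what produces the outer $(\cdot)^{1/2}$, the $C_{\mu}^{1/2}$, and the $\gamma^{K/2}$ in the claimed bound, in the spirit of $L_{p}$-norm performance bounds for approximate dynamic programming; (ii) split the discounted sum at index $K-K^{*}+1$: the recent, good indices contribute at most a geometric series in $\epsilon$, while for the earlier indices $\|\epsilon_{k}\|$ is only bounded by $2\,Q_{\max}$ but is attenuated by at least $\gamma^{K^{*}-1}$, so by the choice of $K^{*}$ in (\ref{eq:k_star}) --- which forces $\gamma^{K^{*}}\le C_{\mu}\epsilon/(2\,Q_{\max})$ --- their total is again $O(C_{\mu}\epsilon)$; (iii) sum the geometric series to get the factor $((1-\gamma^{K+1})/(1-\gamma))^{1/2}$ and collect the remaining constants into the leading $2$. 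Together with Step 2 this gives the stated bound with probability at least $1-\delta$.

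\textbf{Main obstacle.} The bookkeeping in Step 3 is where the real work is: simultaneously handling the $L_{1}$-to-$L_{2}$ passage, inserting the concentrability constant $C_{\mu}$ at the right places, and carrying out the split so that the definition of $K^{*}$ exactly absorbs the pre-run ``bad'' errors into an $O(C_{\mu}\epsilon)$ contribution --- all without leaking spurious constants. A secondary subtlety is the precise probability accounting in Step 2 needed to extract $1-\delta$, rather than merely $\tfrac12+\tfrac{\delta}{2}$, from the dominating Markov chain.
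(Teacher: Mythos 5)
Your overall architecture matches the paper's: stochastic dominance $X_{k}\leq_{st}Y_{k}$, an estimate of $\Pr\left\{ Y_{K}=1\right\}$, and a deterministic error-propagation bound applied on the good event. Two of your steps, however, genuinely diverge from the paper's proof, and both are worth comparing. In Step 2 you observe that $\left\{ Y_{K}=1\right\}$ is exactly the event that the last $K^{*}-1$ transitions are all good, so $\Pr\left\{ Y_{K}=1\right\} =q^{K^{*}-1}$ for every $K\geq K^{*}-1$; the paper instead computes the stationary distribution of $\left\{ Y_{k}\right\}$ and invokes a mixing-time bound to get $\Pr\left\{ Y_{K}=1\right\} \geq q^{K^{*}-1}-2\,\delta'$ once $K$ exceeds the mixing time. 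Your exact computation is simpler and strictly stronger, and it makes the role of the lower bound on $K$ transparent. In Step 3 you propose to re-derive the propagation bound by telescoping $Q_{k+1}=G\,Q_{k}+\epsilon_{k}$ and splitting the discounted sum at index $K-K^{*}+1$, absorbing the pre-run errors via $\gamma^{K^{*}}\leq C_{\mu}\,\epsilon/(2\,Q_{\max})$ from (\ref{eq:k_star}); the paper instead cites Lemma \ref{lem:Error} as a black box. Your version is the more faithful one: Lemma \ref{lem:Error} requires $\|\epsilon_{k}\|_{1,\mu}\leq\epsilon$ for \emph{all} $k=0,\dots,K-1$, whereas the event $\left\{ X_{K}=1\right\}$ only certifies the last $K^{*}-1$ iterations, so the split you describe is exactly what is needed to make the two halves of the argument meet.

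One caveat, which you flag but do not resolve: with $q^{K^{*}-1}\geq1/2+\delta/2$ the dominance argument yields $\Pr\left\{ X_{K}=1\right\} \geq1/2+\delta/2$, not $1-\delta$, so your parenthetical claim in Step 2 that the hypotheses ``push this to $\geq1-\delta$'' does not follow from what you wrote. This is a shared defect rather than a gap unique to you: the paper's own proof ends with $\Pr\left\{ X_{K}=1\right\} \geq q^{K^{*}-1}-2\,\delta'\geq\delta$ and declares the result, which is no better. To genuinely obtain confidence $1-\delta$ one would need, e.g., $q^{K^{*}-1}\geq1-\delta/2$ together with a correspondingly small residual term; as stated, both your argument and the paper's deliver the bound only with probability at least $1/2+\delta/2$.
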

Now to prove Theorem \ref{thm:main}, we combine Lemmas \ref{lem:one_step_poleval} and  \ref{lem:one_step_polimp} to bound the error in one iteration. Next, we use Lemma \ref{lem:errprop} to bound the error after sufficient number of iterations. Also, since our function class forms a reproducing kernel Hilbert space \cite{rahimi2008uniform}, which is dense in the space of continuous functions, the function approximation error is zero.

\section{Numerical Experiments}
In this section, we present experiments showcasing the improved performance attained by our proposed algorithm compared to state-of-the-art deep RL methods. In the first part, we try it on a simpler environment, where one can compute the optimal policy theoretically. The second part focuses on a challenging, high-dimensional environment of  a quadrupedal robot.

\subsection{Proof of Concept}
We first test our proposed algorithm on a synthetic example where we can calculate the optimal value function and optimal policy analytically. In this example, $\mathcal{X} = [0,1]$
and $\mathcal{U} = [0,1]$. The reward is $r(x,u) = -(x-u)^2$ and $P(y|x,u) =\text{Unif}[u,1]$. 
The optimality equation for action-value function can be written as:
\[
Q(x,u) = -(x-u)^2 + \cfrac{\gamma}{1-u}\int_{u}^1 \, \max_w\, Q(y,w) dy.
\]
The value function is $v(x) = \max_u Q(x,u)$. For this example , $v^*(x) = 0$ and $\pi^*(x) = x$.
We ran the experiment with $N_q = 100, N_{\pi} = 100, J_q =20, J_{\pi} = 20$ and discount factor $\gamma = 0.7$. Fig. \ref{fig:error} shows the optimal policy and error in the performance of the policy $\|v^{\pi} - v^*\|_{\infty}$. The approximate policy is computed for $M=10$ and is very close to the optimal policy. The figure with performance error shows that even with a  small number of next state samples in the empirical policy evaluation step in the algorithm, we are able to achieve good performance.
\begin{figure}
	\includegraphics[width=2.5in, height =1.6in]{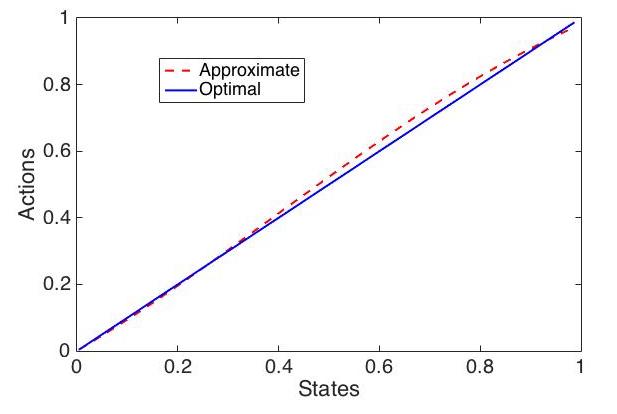}
	\includegraphics[width=2.5in, height =1.6in]{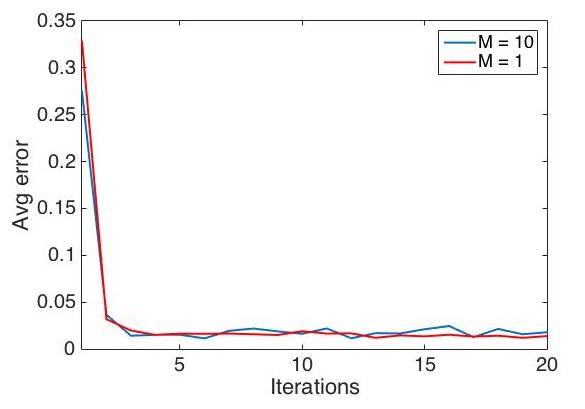}
	\caption{\textbf{Performance of \texttt{RANDPOL} on a synthetic example.} \textbf{Left:} Optimal and approximate policy.  \textbf{Right:} Error with iterations }
	\label{fig:error}
\end{figure}

\subsubsection{Lunar Lander}
In this problem, we used OpenAI gym Lunar Lander environment where the goal is to land a space-ship smoothly in a landing pad (marked by two flags). The state of this environment is represented as $8$-dimensional vector which has position, velocity, lander angle, angular velocity and contact points of the space-ship. The action is $2$-dimensional vector with each value from $-1$ to $1$ which controls the engine power and the direction. The reward depends on the distance from the landing pad. We train our landing agent using our algorithm and compare it with DDPG and PPO.  DDPG is also an actor-critic method but needs a fully connected neural network to be trained while we use randomized neural networks.  Table \ref{tab:2} shows the performance (averaged over $5$ random seeds) of DDPG, PPO and  \texttt{RANDPOL}  along with the number of parameters updated by the algorithm at each step. We see that with significant reduction in trainable parameters, we still have the better performance  for \texttt{RANDPOL}. This clearly shows that we can reduce the number of trainable parameters in the network without compromising on the performance. 

\begin{table}
\centering
 \caption{Maximal reward after 3M steps for Lunar Lander. Reward is the average reward over $5$ random seeds}\label{tab:2}
 \begin{tabular}{||c c c||} 
 \hline
Algorithm &Trainable parameters & Reward\\ [0.5ex] 
 \hline\hline
\texttt{RANDPOL} & 300 & -103.17  \\ 
 \hline
DDPG & 68600 & -195.45\\
 \hline
PPO & 68400  & -153.17 \\
 \hline

  \end{tabular}
  \label{tab:score}
 \end{table}

\subsection{Minitaur}
In this example, we focus on forward locomotion of a quadrupedal robot. The state is a $28$ dimensional vector consisting of position, roll, pitch, velocity etc. The action space is $8$ dimensional vector of torques on the legs. The physics engine for this environment is given by PyBullet \footnote{\texttt{https://github.com/bulletphysics/bullet3/blob/004dcc34041d1e5a5d92f747296b0986922ebb96/examples/\\pybullet/gym/pybullet\_envs/minitaur/envs/minitaur\_gym\_env.py}}. The reward has four components: reward for moving forward, penalty for sideways translation, sideways rotation, and energy expenditure. In our experiments, we maintain a experience replay buffer, previously used in \cite{mnih2013playing,lillicrap2015continuous}, which stores the data from past policies. We sample the data from this buffer which also helps in breaking the correlation among them. For exploration, we use an Ornstein-Uhlenbeck process \cite{lillicrap2015continuous}. We compare against the popular deep RL algorithms: DDPG \cite{lillicrap2015continuous} and PPO \cite{schulman2017proximal}. 
\begin{wraptable}{r}{5.5cm}
 \caption{Maximal reward after 3M steps}\label{wrap-tab:1}
 \begin{tabular}{||c c||} 
 \hline
Algorithm & Avg. Maximal Reward\\ [0.5ex] 
 \hline\hline
\texttt{RANDPOL} & \textbf{13.894}  \\ 
 \hline
DDPG & 12.432 \\
 \hline
PPO & 13.683   \\
 \hline
$\texttt{RANDPOL}_{N}$ & 11.581  \\[1ex] 
 \hline
  \end{tabular}
  \label{tab:score}
\end{wraptable}
In both the algorithms, the policy and the value function is represented by a fully connected deep neural network. DDPG uses deterministic policy while PPO uses stochastic policy, Gaussian in particular. 
We also have a random policy, where the actions are randomly sampled from the action space uniformly.
For \texttt{RANDPOL}, we used randomized networks with two hidden layers, where we tune only the top layer and weights for the bottom layers are chosen randomly at uniform with zero mean and standard deviation inversely proportional to the number of units. These random connections are not trained in the subsequent iterations. Fig. \ref{fig:minitaur} shows the learning curve for DDPG, PPO, \texttt{RANDPOL} and randomly sampled actions. We found that \texttt{RANDPOL} gives better performance compared to both DDPG and PPO. We also tried a variation of \texttt{RANDPOL} where we fix the weights with normal distribution, which we call $\texttt{RANDPOL}_{N}$. Table \ref{tab:score} shows the average maximal score of different algorithm. We found that $\texttt{RANDPOL}_{N}$ sometimes chooses a larger weight for a connection which can degrade the performance compared to uniformly sampled weights.

\begin{figure}
	\includegraphics[width=2.7in, height =2in]{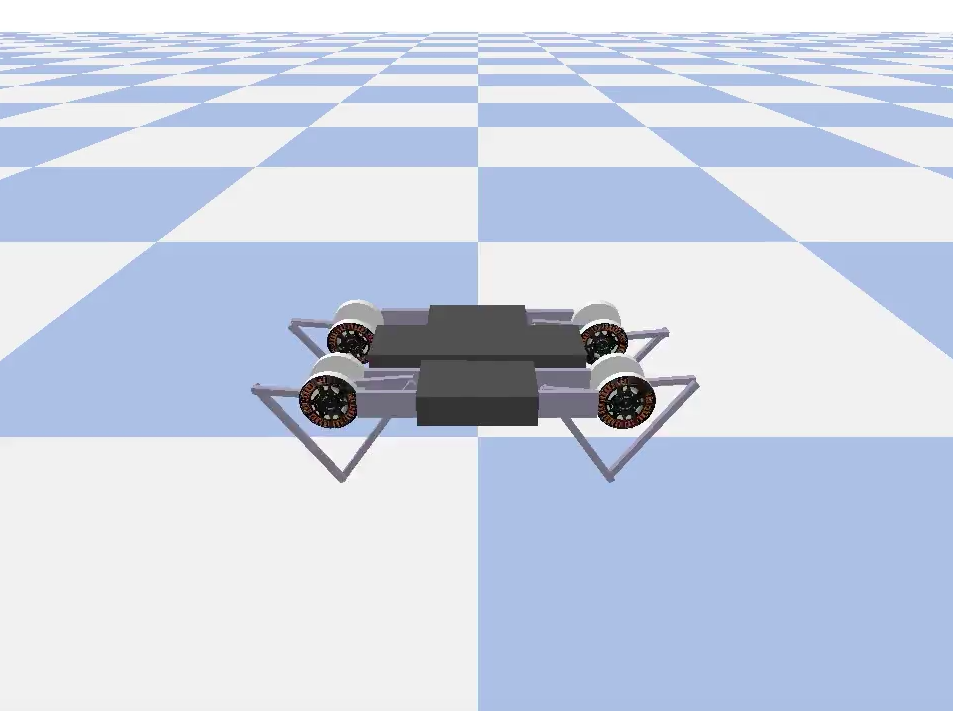}
	\includegraphics[width=2.7in, height =2in]{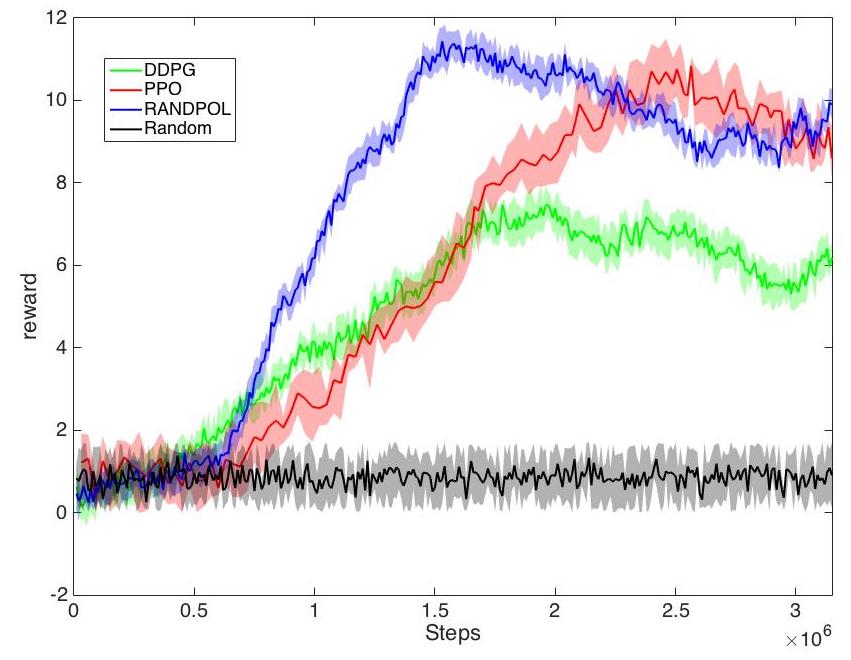}
	\caption{\textbf{Left:} Minitaur environment in PyBullet. \textbf{Right:} Avg. reward for \texttt{RANDPOL}, PPO and DDPG, averaged over five random seeds. Code and video of trained agent available on \texttt{https://github.com/RANDPOL}}
\label{fig:minitaur}
\end{figure}

\section{Conclusions}
We presented \texttt{RANDPOL}, an actor-critic algorithm where both the value and policy are represented using function approximation with random basis. In such networks, the bottom layers are randomly clamped (and thus fixed for the rest of the training) and only the last layer is fine-tuned. This reduces the number of parameters which need training and in fact, improves the performance compared to fully connected networks. We showed that this idea of \emph{randomization is cheaper than optimization} is very effective in high-dimensional challenges like quadrupedal robot. We also analyzed the algorithm theoretically, providing non-asymptotic performance guarantees including prescriptions on
required sample complexity for specified performance bounds.

\newpage
\paragraph*{Broader Impact:} The research reported in this paper has the potential to solve the continuous reinforcement learning problems that have been difficult to solve. The entire NeurIPS research community will benefit from it. We do not anticipate anyone or anything to be put at a disadvantage due to this research. Consequences of failure will be ineffective learning, and no improvement over current methods. The method does not leverage biases in the data.
\bibliographystyle{plain}
\bibliography{References}

\begin{thebibliography}{10}

\bibitem{antos2007value}
Andr{\'a}s Antos, Csaba Szepesv{\'a}ri, and R{\'e}mi Munos.
\newblock Value-iteration based fitted policy iteration: learning with a single
  trajectory.
\newblock In {\em Approximate Dynamic Programming and Reinforcement Learning,
  2007. ADPRL 2007. IEEE International Symposium on}, pages 330--337. IEEE,
  2007.

\bibitem{antos2008fitted}
Andr{\'a}s Antos, Csaba Szepesv{\'a}ri, and R{\'e}mi Munos.
\newblock Fitted q-iteration in continuous action-space mdps.
\newblock In {\em Advances in neural information processing systems}, pages
  9--16, 2008.

\bibitem{2020arXiv200212287G}
Claudio {Gallicchio} and Simone {Scardapane}.
\newblock {Deep Randomized Neural Networks}.
\newblock {\em arXiv e-prints}, page arXiv:2002.12287, February 2020.

\bibitem{haskell2016empirical}
William~B Haskell, Rahul Jain, and Dileep Kalathil.
\newblock Empirical dynamic programming.
\newblock {\em Mathematics of Operations Research}, 41(2):402--429, 2016.

\bibitem{2017arXiv170907506H}
William~B. {Haskell}, Rahul {Jain}, Hiteshi {Sharma}, and Pengqian {Yu}.
\newblock {An Empirical Dynamic Programming Algorithm for Continuous MDPs}.
\newblock {\em IEEE Transactions on Automatic Control}, 2019.

\bibitem{haskell2017randomized}
William~B Haskell, Pengqian Yu, Hiteshi Sharma, and Rahul Jain.
\newblock Randomized function fitting-based empirical value iteration.
\newblock In {\em 2017 IEEE 56th Annual Conference on Decision and Control
  (CDC)}, pages 2467--2472. IEEE, 2017.

\bibitem{islam2017reproducibility}
Riashat Islam, Peter Henderson, Maziar Gomrokchi, and Doina Precup.
\newblock Reproducibility of benchmarked deep reinforcement learning tasks for
  continuous control.
\newblock {\em arXiv preprint arXiv:1708.04133}, 2017.

\bibitem{levine2017shallow}
Nir Levine, Tom Zahavy, Daniel~J Mankowitz, Aviv Tamar, and Shie Mannor.
\newblock Shallow updates for deep reinforcement learning.
\newblock In {\em Advances in Neural Information Processing Systems}, pages
  3135--3145, 2017.

\bibitem{lillicrap2015continuous}
Timothy~P Lillicrap, Jonathan~J Hunt, Alexander Pritzel, Nicolas Heess, Tom
  Erez, Yuval Tassa, David Silver, and Daan Wierstra.
\newblock Continuous control with deep reinforcement learning.
\newblock {\em arXiv preprint arXiv:1509.02971}, 2015.

\bibitem{mania2018simple}
Horia Mania, Aurelia Guy, and Benjamin Recht.
\newblock Simple random search provides a competitive approach to reinforcement
  learning.
\newblock {\em arXiv preprint arXiv:1803.07055}, 2018.

\bibitem{mnih2013playing}
Volodymyr Mnih, Koray Kavukcuoglu, David Silver, Alex Graves, Ioannis
  Antonoglou, Daan Wierstra, and Martin Riedmiller.
\newblock Playing atari with deep reinforcement learning.
\newblock {\em arXiv preprint arXiv:1312.5602}, 2013.

\bibitem{munos2008finite}
R{\'e}mi Munos and Csaba Szepesv{\'a}ri.
\newblock Finite-time bounds for fitted value iteration.
\newblock {\em The Journal of Machine Learning Research}, 9:815--857, 2008.

\bibitem{Put05}
Martin~L. Puterman.
\newblock {\em Markov Decision Processes: Discrete Stochastic Dynamic
  Programming}.
\newblock John Wiley \& Sons, 2005.

\bibitem{rahimi2008uniform}
Ali Rahimi and Benjamin Recht.
\newblock Uniform approximation of functions with random bases.
\newblock In {\em Communication, Control, and Computing, 2008 46th Annual
  Allerton Conference on}, pages 555--561. IEEE, 2008.

\bibitem{rahimi2009weighted}
Ali Rahimi and Benjamin Recht.
\newblock Weighted sums of random kitchen sinks: Replacing minimization with
  randomization in learning.
\newblock In {\em Advances in neural information processing systems}, pages
  1313--1320, 2009.

\bibitem{schulman2017proximal}
John Schulman, Filip Wolski, Prafulla Dhariwal, Alec Radford, and Oleg Klimov.
\newblock Proximal policy optimization algorithms.
\newblock {\em arXiv preprint arXiv:1707.06347}, 2017.

\bibitem{sutton2018reinforcement}
Richard~S Sutton and Andrew~G Barto.
\newblock {\em Reinforcement learning: An introduction}.
\newblock MIT press, 2018.

\end{thebibliography}

\newpage
\appendix
Let us now bound the error in policy evaluation.
\begin{proof}[Proof of Lemma \ref{lem:one_step_poleval}]
	To begin, let $\epsilon'>0$ be arbitrary and choose $f^{*}\in\mathcal{F}\left(\Theta\right)$
	such that $\|f^{*}-G^{\pi}\,Q\|_{1,\,\mu}\leq\inf_{f\in\mathcal{F}\left(\Theta\right)}\|f-G^{\pi}\,Q\|_{1,\,\mu}+\epsilon'$.
	Using Pollard's inequality, we have
	\begin{align}
	\mathbb{P}&\left(\sup_{\widehat{f}\in\widehat{\mathcal{F}}\left(\theta^{1:J}\right)} \left| \| \widehat{f} - G^{\pi}\,Q\|_{1,\mu} -\| \widehat{f} - G^{\pi}\,Q\|_{1,\hat{\mu}}\right| > \epsilon/5 \right) \nonumber \\
	&\leq 8\, e\, (J_Q+1) \left(\cfrac{4 \,e \, Q_{\max}}{(\epsilon/5)}\right)^{J_Q} \, \exp	 \left(\cfrac{-N_Q\,\left({\epsilon}/5\right)^2}{128 \, Q_{\max}^2} \right)
	\label{eq:pollard}
	\end{align}
	where the last inequality uses the fact that the psuedo-dimension for the function class $\widehat{\mathcal{F}}\left(\theta^{1:J_Q}\right)$ is $J_Q$. 
	Now, for a given sample $(x_i,u_i)$, we have
	\[
	|G^{\pi}\, Q(x_i,u_i) -\widehat{G}^{\pi}_M\, Q(x_i,u_i) | = \gamma \bigg{|} \mathbb{E}_{x'} \left[Q(x',\pi(x') \right] - \cfrac{1}{M}\sum_{j=1}^M Q(x'_j,\pi(x_j')) \bigg{|}
	\]
	where $x_j \sim \mathbb{P}(\cdot|x_i,u_i)$ and are i.i.d. Using Hoeffding's concentration inequality followed by union bound, we have
	\begin{align*}
 \mathbb{P}\left(\max_{i=1,2,\ldots N} \bigg{|} G^{\pi}\, Q(x_i,u_i)  -\widehat{G}^{\pi}_M\, Q(x_i,u_i) \bigg{|} > \epsilon/5 \right) \nonumber 
	\leq \gamma \, N \exp\left( \cfrac{-M \, (\epsilon/5)^2}{2 Q_{\max}^2}\right)
	\end{align*}
	Hence, we have
	\begin{equation}
	\mathbb{P}\left(\|G^{\pi}\,Q   -\widehat{G}^{\pi}_M\, Q\|_{1,\hat{\mu}} > \epsilon/5 \right) < \gamma \, N \exp\left( \cfrac{-M \, (\epsilon/5)^2}{2 Q_{\max}^2}\right).
	\label{ineq:hoeffding}
	\end{equation}
	Then, choose $\hat{f}\in\widehat{\mathcal{F}}\left(\theta^{1:J_Q}\right)$
	such that $\|\hat{f}-G^{\pi}\,Q\|_{1,\,\mu}\leq\|f^{*}-G^{\pi}\,Q\|_{1,\,\mu}+\epsilon/5$
	with probability at least $1-\delta/5$ by choosing $J_Q\geq1$
	to satisfy
	\[
	\frac{C}{\sqrt{J_Q}}\left(1+\sqrt{2\,\log\frac{1}{\left(\delta/5\right)}}\right)\leq\frac{\epsilon}{5}
	\Rightarrow 
	J_Q\geq\left[\left(\frac{5 \,C}{\epsilon}\right)\left(1+\sqrt{2\,\log\frac{5}{\delta}}\right)\right]^{2}
	\]
	by Lemma \cite[Lemma 1]{rahimi2009weighted} and that $\|\widehat{f}-f^* \|_{1,\mu} \leq \|\widehat{f}-f^* \|_{2,\mu} $ by Jenson's inequality.

	Now we have the following string of inequalities, each of which hold with probability at least $1-\delta/5$:
	\begin{align}
	\| Q' - {G}^{\pi}\,Q\|_{1,\mu} &\leq \| Q' - {G}^{\pi}\,Q\|_{1,\hat{\mu}} + \epsilon/5 \label{ineq:1}\\ 
	&\leq \| Q' - \widehat{G}_{M}^{\pi}\, Q\|_{1,\hat{\mu}} + 2\epsilon/5\label{ineq:3}\\
	&\leq \| \hat{f} - \widehat{G}_{M}^{\pi}\, Q\|_{1,\hat{\mu}} + 2\epsilon/5\label{ineq:4}\\
	&\leq \|f^* - \widehat{G}_{M}^{\pi}\, Q\|_{1,\hat{\mu}} + 3\epsilon/5\label{ineq:5}\\
	&\leq \| f^* - {G}^{\pi}\,Q\|_{1,\hat{\mu}} + 4\epsilon/5 \label{ineq:7}\\
	&\leq \| f^* - {G}^{\pi}\,Q\|_{1,{\mu}} + \epsilon \label{ineq:8}\\
	&\leq \inf_{f\in\mathcal{F}\left(\Theta\right)}\|f-{G}^{\pi}\,Q\|_{1,\,\mu}+\epsilon' + \epsilon. \label{ineq:9}
\end{align}
We choose $N_Q$ from inequality (\ref{eq:pollard}) such that inequalities (\ref{ineq:1}) and (\ref{ineq:8})  hold with at least probability $1- \delta/5$. Inequalities (\ref{ineq:3}) and (\ref{ineq:7}) follow by bounding right side of (\ref{ineq:hoeffding}) by $ \delta/5$ and appropriately choosing $M$. Inequality (\ref{ineq:4}) follows from the fact that $\widehat{G}$ gives the least approximation error compared to any other function $\widehat{f} \in \widehat{\mathcal{F}}\left(\theta^{1:J_Q}\right)$.  Inequality (\ref{ineq:5}) follows by the choice of $J_Q$. The last inequality is by the choice of $f^*$. 

\end{proof}
Before we prove the bound on policy improvement approximation, let us first show some auxiliary results:
\begin{lemma}
Under Assumption \ref{assum:radon}, for any action-value function $Q$ and policy $\pi $, we have
\[
\|G\,Q - G^{\pi}\,Q \|_{1,\mu} \leq \gamma \, C_{\mu}\, \|H\, Q -H^{\pi}Q \|_{1,\mu}.
\]
\label{lem:radon}
\end{lemma}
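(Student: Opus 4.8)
The plan is to argue pointwise in $(x,u)$ and then integrate against $\mu$, the whole thing resting on one sign observation plus the Radon--Nikodym bound. First I would subtract the two operator definitions: since the reward $r(x,u)$ appears identically in $G\,Q(x,u)$ and $G^{\pi}\,Q(x,u)$, it cancels, leaving
\[
G\,Q(x,u) - G^{\pi}\,Q(x,u) \;=\; \gamma \int_{\mathcal{X}} \Bigl( \max_{u'} Q(y,u') - Q(y,\pi(y)) \Bigr)\, P(dy \mid x,u).
\]
The integrand equals $H\,Q(y) - H^{\pi}\,Q(y)$ and is non-negative for every $y$, because $H\,Q(y) = \sup_{u'} Q(y,u') \ge Q(y,\pi(y)) = H^{\pi}\,Q(y)$; consequently $G\,Q(x,u) - G^{\pi}\,Q(x,u) \ge 0$ as well.

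Next I would invoke Assumption \ref{assum:radon}, which gives $\|dP(\cdot\mid x,u)/d\mu\|_\infty \le C_\mu$ uniformly in $(x,u)$. Because the integrand is non-negative, writing $P(dy\mid x,u) = \tfrac{dP(\cdot\mid x,u)}{d\mu}(y)\,\mu(dy)$ and bounding the density by $C_\mu$ yields, for every $(x,u)$,
\[
0 \;\le\; G\,Q(x,u) - G^{\pi}\,Q(x,u) \;\le\; \gamma\, C_\mu \int_{\mathcal{X}} \bigl(H\,Q(y) - H^{\pi}\,Q(y)\bigr)\, \mu(dy) \;=\; \gamma\, C_\mu \,\|H\,Q - H^{\pi}\,Q\|_{1,\mu}.
\]

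Finally, since $G\,Q - G^{\pi}\,Q$ is non-negative, $|G\,Q - G^{\pi}\,Q| = G\,Q - G^{\pi}\,Q$ pointwise, and the right-hand side above is a constant independent of $(x,u)$; integrating over the probability measure $\mu$ on $\mathcal{X}\times\mathcal{U}$ gives $\|G\,Q - G^{\pi}\,Q\|_{1,\mu} \le \gamma\, C_\mu \,\|H\,Q - H^{\pi}\,Q\|_{1,\mu}$, which is the claim. I do not expect any serious obstacle here: the only point requiring care is the non-negativity of the Bellman gap, which is exactly what lets us drop the absolute value before applying the density bound, and which also justifies the mild notational slide between $\mu$ on the product space and its marginal on $\mathcal{X}$ in the right-hand norm.
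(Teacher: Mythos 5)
Your proposal is correct and follows essentially the same route as the paper's own proof: cancel the reward term, observe that the Bellman gap $G\,Q - G^{\pi}\,Q$ is pointwise non-negative so the absolute value can be dropped, bound the transition density by $C_{\mu}$ via Assumption \ref{assum:radon}, and integrate the resulting constant bound against $\mu$. Your explicit remark about the slide between $\mu$ on the product space and its marginal on $\mathcal{X}$ is a point the paper leaves implicit, but the argument is otherwise identical.
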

\begin{proof}
For any state-action pair $(x,u) \in \mathcal{X} \times \mathcal{U}$, we have
\begin{align*}
G\,Q(x,u) - G^{\pi}\,Q(x,u) &= \gamma \int  \left(\max_{u'} Q(y,u') - Q(y,\pi(y))\right) dP(y|x,u)\\
&\leq \gamma \, C_{\mu}\, \int \left(H\,Q(y)- H^{\pi}\, Q(y) \right) d\mu .
\end{align*}
where we used Assumption \ref{assum:radon} for the last inequality. Also note that $G\,Q(x,u) - G^{\pi}\,Q(x,u)$ is non-negative.  Now,
\begin{align*}
\| G\,Q- G^{\pi}\,Q\|_{1,\mu} &= \int \bigg{|} G\,Q(x,u) - G^{\pi}\,Q(x,u) \bigg{|} d\mu\\
&\leq \gamma \, C_{\mu} \left( \int \left(H\,Q(y)- H^{\pi}\, Q(y) \right) d\mu \right) \\
\end{align*}

\end{proof}
\begin{lemma}
Under Assumption \ref{assum:lips_mdp}, for any $Q\in\mathcal{F}\left(\Theta\right)$ and $\pi \in {\Pi}\left(\Theta\right)$, we have for all $x \in \mathcal{X}$,
\[
\big{|} H\,Q(x) - H^{\pi}\,Q (x) \big{|}  \leq L_U \, \|\widetilde{\pi}(x)-{\pi}(x) \|
\]
where $\widetilde{\pi}(x) = \arg \max_{u} Q(x,u)$ and $L_U= L_r + \gamma\, Q_{\max}L_p$.
\label{lem:lips_poImp}
\end{lemma}
\begin{proof}
Using the Lipschitz Assumption \ref{assum:lips_mdp}, we have
\begin{align*}
&|Q(x,u) - Q(x,u')| \\
&\hspace{-1mm}\leq |r(x,u) - r(x,u')| \hspace{-1mm}+ \gamma \hspace{-1mm}\int_{\mathcal{X}}\hspace{-1mm} \left|\left(P(dy|x,u)-P(dy|x,u') \right)\max_{\cdot}Q(y,\cdot)\right|\\
&\leq L_r |u-u'| + \gamma \, Q_{\max}\int_{\mathcal{X}} \left|P(dy|x,u)-P(dy|x,u') \right|\\
&\leq (L_r + \gamma \,Q_{\max}L_p)\, \|u-u'\|
\end{align*}
Now, we use the Lipschitz property of $Q$ function as follows:
\begin{align*}
\big{|} H\,Q(x) - H^{\pi}\,Q (x) \big{|} &= \big{|} Q(x,\widetilde{\pi}(x)) - Q (x,\pi(x)) \big{|}\\
&\leq (L_r + \gamma \,Q_{\max}L_p) \|\widetilde{\pi}(x)-\pi(x)\|.
\end{align*}
\end{proof}
\begin{proof}[Proof of Lemma \ref{lem:one_step_polimp} ]
Let $\epsilon'>0$ be arbitrary and choose $f^{*}\in\Pi \left(\Theta\right)$
	such that $\|H\, Q - H^{f^{*}}\, Q\|_{1,\,\mu}\leq\inf_{f\in \Pi \left(\Theta\right)}\|H\, Q - H^{f}\, Q\|_{1,\,\mu}+\epsilon'$. Similar to policy evaluation step, we have
	\begin{align}
	\mathbb{P}&\left(\sup_{\pi \in\widehat{\Pi}\left(\theta^{1:J_{\pi}}\right)} \bigg| \| H\, Q - H^{\pi}\, Q \|_{1,\mu} -\| H\, Q - H^{\pi}\, Q \|_{1,\hat{\mu}}\bigg| > \epsilon/3 \right) \nonumber \\
	&\leq 8\, e\, (J_{\pi}+1) \left(\cfrac{4 \,e \, \mathcal{U}_{\max}}{(\epsilon/3)}\right)^{J_{\pi}} \, \exp	 \left(\cfrac{-N_{\pi}\,\left({\epsilon}/3\right)^2}{128 \, Q_{\max}^2} \right)
	\label{eq:pollard_pi}
	\end{align}
	where the last inequality follows from Pollard's inequality and the facts that the psuedo-dimension for the underlying function class is $J_{\pi}$.	Also, note that since $H\, Q (x) - H^{\pi}\, Q (x)$ is non-negative for any $x \in \mathcal{X}$ and $\pi $ maximizes the empirical mean of action-value functions, we have for any $f \in \widehat{\Pi}\left(\theta^{1:J_{\pi}}\right)$:
\begin{align}
\| H\, Q - H^{\pi}\, Q \|_{1,\hat{\mu}} = \frac{1}{N_{\pi}} \sum_{i=1}^{N_{\pi}} \, H\, Q(x_i) - \frac{1}{N_{\pi}} \sum_{i=1}^{N_{\pi}} \, H^{\pi}\, Q(x_i) \leq \| H\, Q - H^{f}\, Q \|_{1,\hat{\mu}} \label{inequ:emp}
\end{align}
	 Now we have the following string of inequalities, each of which hold with probability $1-\delta/3$:
	\begin{align}
	\| H\, Q - H^{\pi}\, Q \|_{1,\mu} &\leq \| H\, Q - H^{\pi}\, Q \|_{1,\hat{\mu}} + \epsilon/3 \label{inequ:1}\\ 
	&\leq  \| H\, Q - H^{f}\, Q \|_{1,\hat{\mu}} + \epsilon/3 \label{inequ:2}\\
	&\leq  \| H\, Q - H^{f}\, Q \|_{1,{\mu}} + 2\epsilon/3 \label{inequ:3}\\
	&\leq   \| H\, Q - H^{f^*}\, Q \|_{1,{\mu}} + \| H^{f^*}\, Q - H^{f}\, Q \|_{1,{\mu}} + 2\epsilon/3  \label{inequ:4}\\
	&\leq  \| H\, Q - H^{f^*}\, Q \|_{1,{\mu}} + L_U \, \|f^*-f \|_{1,{\mu}} + 2\epsilon/3 \label{inequ:5}\\
	&\leq \inf_{f\in \Pi \left(\Theta\right)}\|H\, Q - H^{f}\, Q\|_{1,\,\mu}+\epsilon' + \epsilon. \label{inequ:6}
\end{align}
The inequalities (\ref{inequ:1}) and (\ref{inequ:3})  by choosing $N_{\pi}$ such that (\ref{eq:pollard_pi}) is true with atleast probability $1- \delta/3$. Inequality (\ref{inequ:2}) follows from (\ref{inequ:emp}) and  (\ref{inequ:4})  is due to triangle's inequality. To prove inequality (\ref{inequ:5}), we first use
 Lemma \ref{lem:lips_poImp} for policies $f^*$ and $f$ and then  \cite[Lemma 1]{rahimi2009weighted} such that the following holds with probability at least $1-\delta/3$: 
\begin{align}
\| H^{f^*}\, Q - H^{f}\, Q  \|_{1,{\mu}} &\leq \|f^* - f \|_{1,{\mu}} \leq \|f^* - f \|_{2,{\mu}} \nonumber\\
&\leq \frac{C'}{\sqrt{J_\pi}}\left(1+\sqrt{2\,\log\frac{1}{\left(\delta/3\right)}}\right).
\end{align}
Bounding right side by $\epsilon/3L_U$ gives us a bound on $J_{\pi}$.
 The last inequality is by the choice of $f^*$. Using Lemma \ref{lem:radon}	 concludes the lemma.
\end{proof}

Now before proving Lemma \ref{lem:errprop},  we will see how the error propagates through iterations.
\begin{lemma}\cite[Lemma 7] {antos2007value}
	\label{lem:Error} For any $K\geq1$, and $\epsilon>0$, suppose
	$\|\epsilon_{k}\|_{1,\,\mu}\leq\epsilon$ $\forall$  $k=0,\,1,\ldots,\,K-1$,
	then
	\begin{equation}
	\|Q^{\pi_K}-Q^{*}\|_{1,\,\mu}\leq2\left(\frac{1-\gamma^{K+1}}{(1-\gamma)^2}\right) \left[C_{\mu} \,\epsilon+\gamma^{K/2}\left(2\,Q_{\max}\right)\right].\label{eq:Error}
	\end{equation}
\end{lemma}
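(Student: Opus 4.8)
The plan is to prove the stated bound by the classical componentwise error-propagation analysis for approximate value iteration (in the style of Bertsekas--Tsitsiklis and Munos), since the recursion generated by \texttt{RANDPOL} is exactly $Q_{k+1}=G\,Q_k+\epsilon_k$ with $\|\epsilon_k\|_{1,\mu}\le\epsilon$. Throughout, for a policy $\pi$ I write $(P^{\pi}f)(x,u):=\mathbb{E}_{x'\sim P(\cdot|x,u)}[f(x',\pi(x'))]$, so that $G^{\pi}f=r+\gamma P^{\pi}f$, $G f=\max_{\pi}G^{\pi}f$, and each $P^{\pi}$ is a positive (monotone) linear operator with $\|P^{\pi}f\|_\infty\le\|f\|_\infty$. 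Let $\pi_k$ be the greedy policy with respect to $Q_k$, so $G Q_k=G^{\pi_k}Q_k$, and recall $Q^*=G Q^*=G^{\pi^*}Q^*$ and $Q^*\ge Q^{\pi_K}$ pointwise, so that $\|Q^{\pi_K}-Q^*\|_{1,\mu}=\int(Q^*-Q^{\pi_K})\,d\mu$. Since every function in play lies in $\mathcal{B}(\mathcal{X},\mathcal{U})$, we also have $\|Q^*-Q_0\|_\infty\le 2Q_{\max}$.

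First I would derive a two-sided pointwise recursion for $b_k:=Q^*-Q_k$. Using $G Q_k\ge G^{\pi^*}Q_k$ and monotonicity gives the upper bound $b_{k+1}\le\gamma P^{\pi^*}b_k-\epsilon_k$, while using $G Q_k=G^{\pi_k}Q_k$ together with $Q^*\ge G^{\pi_k}Q^*$ gives the lower bound $b_{k+1}\ge\gamma P^{\pi_k}b_k-\epsilon_k$. Iterating the upper bound yields $b_K\le\gamma^K(P^{\pi^*})^K b_0-\sum_{j=0}^{K-1}\gamma^{K-1-j}(P^{\pi^*})^{K-1-j}\epsilon_j$, and the lower recursion expresses $b_K$ from below through analogous products of the kernels $P^{\pi_j}$; together these control $|b_K|$ by nonnegative combinations of stochastic-kernel products applied to $|\epsilon_j|$ and $|b_0|$.

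Next I would convert this value-iteration error into the policy loss. Expanding $Q^*-Q^{\pi_K}=G^{\pi^*}Q^*-G^{\pi_K}Q^{\pi_K}$, adding and subtracting $G^{\pi^*}Q_K$ and $G^{\pi_K}Q_K$, and using $G^{\pi^*}Q_K\le G^{\pi_K}Q_K$ (greediness of $\pi_K$) gives $(I-\gamma P^{\pi_K})(Q^*-Q^{\pi_K})\le\gamma(P^{\pi^*}-P^{\pi_K})\,b_K$. Since $(I-\gamma P^{\pi_K})^{-1}=\sum_{t\ge0}\gamma^t(P^{\pi_K})^t$ is monotone, this yields $Q^*-Q^{\pi_K}\le\gamma\sum_{t\ge0}\gamma^t(P^{\pi_K})^t(P^{\pi^*}-P^{\pi_K})\,b_K$, into which I substitute the two-sided bound on $b_K$ from the previous step.

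Finally I would pass to $\|\cdot\|_{1,\mu}$ and sum the geometric series. The point that makes a single power of $C_\mu$ suffice is that Assumption \ref{assum:radon} bounds the one-step next-state density by $C_\mu$ \emph{uniformly} over the starting pair; hence for any probability measure $\rho$ and policy $\sigma$ the pushforward $\rho P^{\sigma}$ is dominated by $C_\mu\mu$, and this does not compound across steps, so $\int\big(P^{\sigma_1}\cdots P^{\sigma_m}g\big)\,d\mu\le C_\mu\|g\|_{1,\mu}$ for nonnegative $g$ and every $m\ge1$. Applying this to the expansion, bounding $\|\epsilon_j\|_{1,\mu}\le\epsilon$ and $\|b_0\|_\infty\le 2Q_{\max}$, and summing — one factor $\tfrac{1}{1-\gamma}$ from the Neumann series in $t$ and one factor $\tfrac{1-\gamma^{K}}{1-\gamma}$ from the sum over $j$ — produces the prefactor $2\big(\tfrac{1-\gamma^{K+1}}{(1-\gamma)^2}\big)$, the error contribution $C_\mu\,\epsilon$, and an initialization contribution of order $\gamma^{K}(2Q_{\max})\le\gamma^{K/2}(2Q_{\max})$ (using $\gamma\in(0,1)$, $K\ge1$), which is exactly the claimed bound. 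The main obstacle is the middle difference $P^{\pi^*}-P^{\pi_K}$, which is not sign-definite: controlling $(P^{\pi^*}-P^{\pi_K})b_K$ in absolute value is precisely why both the upper and lower recursions for $b_K$ are needed, and the bookkeeping of which kernel products multiply each $\epsilon_j$ must be tracked carefully. The secondary delicate point is the no-compounding concentration argument keeping $C_\mu$ to the first power, which is what prevents the bound from degrading to a term like $C_\mu^{K}$.
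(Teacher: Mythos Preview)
The paper does not prove this lemma at all; it is stated with an explicit citation to \cite[Lemma~7]{antos2007value} and then used as a black box to choose $K^*$ and to feed into Lemma~\ref{lem:errprop}. Your sketch is precisely the standard Munos / Antos--Szepesv\'ari--Munos componentwise error-propagation argument that the cited reference carries out, so in effect you are reconstructing the original proof rather than offering an alternative to anything in this paper.

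Two points in your write-up deserve care. First, the non-compounding claim $\int(P^{\sigma_1}\cdots P^{\sigma_m}g)\,d\mu\le C_\mu\|g\|_{1,\mu}$ requires reconciling the density bound of Assumption~\ref{assum:radon} (which controls the next-state marginal on $\mathcal{X}$) with the norm $\|\cdot\|_{1,\mu}$ taken over $\mathcal{X}\times\mathcal{U}$; the pushforward $\rho P^{\sigma}$ is supported on the graph of $\sigma$ and is not absolutely continuous with respect to a product measure, so the argument has to pass through the state marginal---this is exactly what the concentrability-coefficient machinery in the cited paper formalizes. Second, in \texttt{RANDPOL} the output policy $\pi_K$ is only \emph{approximately} greedy with respect to $Q_K$, so the inequality $G^{\pi^*}Q_K\le G^{\pi_K}Q_K$ you invoke does not hold exactly; the paper's one-step decomposition folds the greedy-approximation slack into $\epsilon_k$, and the cited lemma is stated for the exactly greedy policy, which is why the paper can quote it once that slack has been absorbed into $\epsilon$.
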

\paragraph{Choice of $K^*$}: Now, from (\ref{eq:Error}), we have  
\begin{equation*}
	\|Q^{\pi_K}-Q^{*}\|_{1,\,\mu}\leq2\left(\frac{1}{(1-\gamma)^2}\right) \left[C_{\mu}\, \epsilon+\gamma^{K/2}\left(2\,Q_{\max}\right)\right]
	\end{equation*}
which gives a bound on $K$ such that $\gamma^{K/2}\left(2\,v_{\max}\right) \leq C_{\mu}\,\epsilon$.
Denote

$$K^* = \left  \lceil \cfrac{\log\left( C_{\mu}\,\epsilon\right) - \log \left(2\, Q_{\max}\right)}{\log \gamma} \right\rceil.$$

\begin{proof}[Proof of Lemma \ref{lem:errprop}]
	 First, we show that $X_{k}\leq_{st}Y_{k}$
	holds for all $k\geq0$.
The stochastic dominance relation is the
	key to our analysis, since if we can show that $Y_{K}$ is ``small''
	with high probability, then $X_{K}$ must also be small and we infer
	that $\|Q^{\pi_K}-Q^{*}\|_{1,\,\mu}$ must be close to zero. By construction,
	 $X_{k}\leq_{st}Y_{k}$ for all $k\geq0$ 
	(see \cite[Lemma A.1]{haskell2016empirical} and \cite[Lemma A.2]{haskell2016empirical}).
	
	Next, we
	compute the steady state distribution of $\left\{ Y_{k}\right\} _{k\geq0}$
	and its mixing time. In particular, choose
	$K$ so that the distribution of $Y_{K}$ is close to its steady state
	distribution. Since $\left\{ Y_{k}\right\} _{k\geq0}$ is an irreducible
	Markov chain on a finite state space, its steady state distribution
	$\mu=\left\{ \mu\left(i\right)\right\} _{i=1}^{K^{*}}$ on $\mathcal{K}$
	exists. By \cite[Lemma 4.3]{haskell2016empirical}, the steady state distribution
	of $\left\{ Y_{k}\right\} _{k\geq0}$ is $\mu=\left\{ \mu\left(i\right)\right\} _{i=1}^{K^{*}}$
	given by: 
	\begin{align*}
	\mu\left(1\right)=\, & q^{K^{*}-1}\\
	\mu\left(i\right)=\, & \left(1-q\right)q^{K^{*}-i}, & \forall i=2,\ldots,K^{*}-1,\\
	\mu\left(K^{*}\right)=\, & 1-q.
	\end{align*}
	The constant $$\mu_{\min}\left(q;\,K^{*}\right)=\min\left\{ q^{K^{*}-1},\,\left(1-q\right)q^{\left(K^{*}-2\right)},\,\left(1-q\right)\right\} $$
	 $\forall q\in\left(0,\,1\right)$ and $K^{*}\geq1$ appears
	shortly in the Markov chain mixing time bound for $\left\{ Y_{k}\right\} _{k\geq0}$.
	We note that $\left(1-q\right)q^{K^{*}-1}\leq\mu_{\min}\left(q;\,K^{*}\right)$
	is a simple lower bound for $\mu_{\min}\left(q;\,K^{*}\right)$. Let
	$Q^{k}$ be the marginal distribution of $Y_{k}$ for $k\geq0$. By
	a Markov chain mixing time argument, we have
	\begin{eqnarray*}
	t_{\text{mix}}\left(\delta'\right) &\triangleq & \min\left\{ k\geq0\text{ : }\|Q^{k}-\mu\|_{TV}\leq\delta'\right\}\nonumber\\ 
	& \leq & \log\left(\frac{1}{\delta'\mu_{\min}\left(q;\,K^{*}\right)}\right)\nonumber\\
	& \leq & \log\left(\frac{1}{\delta'\left(1-q\right)q^{K^{*}-1}}\right)
	\end{eqnarray*}
	for any $\delta'\in\left(0,\,1\right)$.

	Finally, we conclude the argument by using the previous
	part to find the probability that $Y_{K}=1$, which is an upper bound
	on the probability that $X_{K}=1$, which is an upper bound on the
	probability that $\|Q^{\pi_K}-Q^{*}\|_{1,\,\mu}$ is below our desired
	error tolerance.	For $K\geq\log\left(1/\left(\delta'\left(1-q\right)q^{K^{*}-1}\right)\right)$
	we have $|\text{Pr}\left\{ Y_{K}=1\right\} -\mu\left(1\right)|\leq2\,\delta'$.
	Since $X_{K}\leq_{st}Y_{K}$, we have $\text{Pr}\left\{ X_{K}=1\right\} \geq\text{Pr}\left\{ Y_{K}=1\right\} $
	and so $\text{Pr}\left\{ X_{K}=1\right\} \geq q^{K^{*}-1}-2\,\delta'$.
	Choose $q$ and $\delta'$ to satisfy $q^{K^{*}-1}=1/2+\delta/2$
	and $2\,\delta'=q^{K^{*}-1}-\delta=1/2-\delta/2$ to get $q^{K^{*}-1}-2\,\delta'\geq\delta$,
	and the desired result follows.
\end{proof}

\end{document}